\newtheorem{theorem}{Theorem}[section]
\newtheorem{proposition}[theorem]{Proposition}
\newtheorem{definition}[theorem]{Definition}
\newtheorem{corollary}[theorem]{Corollary}
\newtheorem{lemma}[theorem]{Lemma}
\newcommand{\ignore}[1]{}
\def\secref#1{Section~\ref{#1}}
\def\eqref#1{Eq.~\ref{#1}}
\def\1{\bm{1}}
\newcommand{\reals}{\mathbb{R}}
\def\vzero{{\bm{0}}}
\def\vc{{\bm{c}}}
\def\ve{{\bm{e}}}
\def\vu{{\bm{u}}}
\def\vw{{\bm{w}}}
\def\vx{{\bm{x}}}
\def\vX{{\bm{X}}}
\def\vy{{\bm{y}}}
\def\vz{{\bm{z}}}
\def\mA{{\bm{A}}}
\def\mI{{\bm{I}}}
\def\mX{{\bm{X}}}
\DeclareMathAlphabet{\mathsfit}{\encodingdefault}{\sfdefault}{m}{sl}
\SetMathAlphabet{\mathsfit}{bold}{\encodingdefault}{\sfdefault}{bx}{n}
\newcommand{\ourapp}{MAXMIN-SDR}
\newcommand{\comment}[1]{}
\newcommand{\diag}[1]{{\rm{diag}}\left({#1}\right)}
\DeclareMathOperator{\sign}{sign}
\begin{document}

%
\title{Maximin Optimization for Binary Regression}

%
%
%

\author{Nisan~Chiprut,
        Amir~Globerson,
        and~Ami~Wiesel
\thanks{ The authors are with the Hebrew University of Jerusalem, Tel Aviv University, and
Google Research. This research was partially supported by the Israel Science
Foundation (ISF) under Grant 1339/15}}

%
%

\markboth{Journal of \LaTeX\ Class Files, October~2020}%
{Shell \MakeLowercase{\textit{et al.}}: Bare Demo of IEEEtran.cls for IEEE Communications Society Journals}
%



\maketitle


\begin{abstract}

We consider regression problems with binary weights. Such optimization problems are ubiquitous in quantized learning models and digital communication systems. A natural approach is to optimize the corresponding Lagrangian using variants of the gradient ascent-descent method. Such maximin techniques are still poorly understood even in the concave-convex case. The non-convex binary constraints may lead to spurious local minima. Interestingly, we prove that this approach is optimal in linear regression with low noise conditions as well as robust regression with a small number of outliers. Practically, the method also performs well in regression with cross entropy loss, as well as  non-convex multi-layer neural networks. Taken together our approach highlights the potential of saddle-point optimization for learning constrained models.

\ignore{
Many machine learning optimization problems involve non-convex constraints. A key example is learning models with quantized weights. A conceptually simple approach to such optimization problems is to optimize the corresponding Lagrangian using variants of gradient ascent-descent. However, in the non-convex case there could be many locally optimal points for this procedure. Here we show the surprising result that for binary constraints on parameters this approach is in fact optimal under certain conditions, leading to exact recovery of binary weights. The resulting method is highly scalable and extendable to other settings. Furthermore, our empirical evaluations show that it outperforms the popular straight-through estimator (STE) in some scenarios. Taken together our approach highlights the potential of saddle-point optimization for learning constrained non-linear models.

Many machine learning optimization problems involve non-convex constraints. A key example is learning models with binary weights. A conceptually simple approach to such optimization problems is to solve the corresponding Lagrangian using variants of gradient descent-ascent. Such saddle point optimizations are still poorly understood, especially in the non-convex case. Here we show the surprising result that this approach is in fact optimal in both linear and robust binary regressions under certain conditions on the noise. Practically, the resulting method is scalable and extendable to other settings including neural networks with binary weights. Our empirical evaluations show that it outperforms the popular straight-through estimator in some scenarios. Taken together our approach highlights the potential of saddle-point optimization for learning constrained non-linear models.}
\ignore{
Learning quantized models is essentially a discrete optimization problem. As such, it seems natural to apply the powerful method of semidefinite relaxation (SDR) to it.  However, SDR optimizers do not scale well, and it is also unclear how to apply them to non-linear models. Here we propose the {\ourapp} approach, which overcomes the above two challenges. SDR can be cast as an equivalent saddle-point problem. We show that, under certain assumptions, this problem can then be solved in a scalable way via gradient descent-ascent, and easily extends to the non-linear case. We show that our method outperforms the popular straight-through estimator (STE) in some settings. Taken together our approach highlights the potential of saddle-point optimization for learning constrained non-linear models.
}
\end{abstract}

\begin{IEEEkeywords}
Maximin, Linear Regression, Robust Linear Regression, Huber, Saddle-point, Non-Convex Optimization
\end{IEEEkeywords}

%
\IEEEpeerreviewmaketitle

\section{Introduction}
Optimizations with binary constraints are ubiquitous in signal processing and machine learning.  Two important examples are digital communication systems \cite{verdu1998multiuser} and quantized statistical models \cite{binaryConnect}. Even with a convex objective, the binary set is combinatorial and leads to computationally hard minimizations. A continuous approach is to express the feasible set using quadratic constraints, introduce dual variables and obtain a saddle point optimization of the associated Lagrangian \cite{boyd2004convex}. Indeed, this is the starting point of the seminal semidefinite relaxation (SDR) approach to binary least squares \cite{QCQP_SDR}. Unfortunately, SDR scales poorly and is mostly applicable to the quadratic case. Independently, the last years witnessed a growing body of works on saddle point optimization methods due to the success of Generative Adversarial Networks (GAN). Therefore, we consider the use of these maximin techniques for convex optimizations with binary constraints, with an emphasis on binary regressions. Specifically, we prove that in the realizable setting and under favourable conditions on the data and loss function, such procedures will in fact converge to the true binary variables. 

Linear regression with binary weights, also known as binary least squares, has a long history. A classical application is detection of binary symbols in multiple input multiple output (MIMO), multiuser communication systems \cite{verdu1998multiuser} and recently binary compressed sensing \cite{fosson2018non}. Naive algorithms simply take the sign of the unconstrained solution. Modern methods rely on successive cancellation machinery \cite{verdu1998multiuser}, quadratic regularization \cite{fosson2019recovery}, and efficient search techniques as sphere decoding \cite{jalden2005complexity}. A state of the art approach is SDR that approximates the problem using a convex semidefinite program \cite{ma2010semidefinite}. SDR is the exact solution to the Lagrange dual program. In large systems, SDR gives a constant factor approximation to the optimal solution \cite{BNN_SDR}, and  achieves full diversity order \cite{jalden2008diversity}. Unfortunately, it scales poorly and it is not clear how to extend it beyond the quadratic Gaussian setting. 

More recently, the topic of learning quantized models has attracted considerable attention due to the deployment of complex deep networks on mobile devices with limited memory and energy resources. The most common approach to building compact models is to use lower bit-resolution for the weights and potentially also for the activation function. As an extreme case, researchers consider the case of binary weights. Clearly, this considerably complicates the learning process, since the discrete nature of the weights seems to make gradient descent inapplicable. The key approach for handling this challenge is straight-through estimators (STE) \cite{binaryConnect}, which ignore the non-differentiable sign function in the backward pass. STE can also be extended to more general low-precision weights, non-differentiable quantization operations and activations \cite{Hubara:2017:QNN,XNOR,dorefaNet}. The Gumble softmax trick was also recently used together with STE \cite{BNN_gumbelSoftmax} . See \cite{bethge2019back} for a summary of common STE practices in network quantization. Altogether, while 
these are sometimes empirically effective, this is not always the case (e.g. \cite{yin2019understanding}), and there is little theory to explain when they are expected to work.   

The motivation to our work are the recent advances in saddle-point optimization \cite{razaviyayn2020nonconvex}. Historically, these are known to be difficult, unstable and poorly understood. Yet, the overwhelming success in training GANs \cite{goodfellow2014generative}, and the growing research on adversarial machine learning, led to a renewed interest in this topic. The simplest method is gradient descent-ascent (GDA) which performs  alternating update steps on the two variables.
Its advantages are simplicity and scalability.
Unfortunately, even in simple convex-concave problems it is known that GDA may diverge. 
Therefore, there is an increased interest in design and analysis of improved variants, e.g. GDA$_\infty$, Extra Gradient (EG) and the Optimistic GDA (OGDA)
\cite{daskalakis2018limit,on_gda,localMinimax}. In non-convex binary minimizations, the problem is even more challenging as there may be no saddle point or spurious points. Together, these all give rise to a natural question: Can maximin methods solve the Lagrangian relaxation in binary optimizations? In what follows, we show that the answer is positive under favourable data conditions.




The goal of this paper is to introduce and analyze saddle point optimization of the Lagrangians in binary regression. From a theoretical perspective, we prove global optimality under two settings:
\begin{itemize}
    \item Low-noise binary linear regression.
    \item Robust linear regression with few outliers.
\end{itemize}
The proofs are based on the general saddle point conditions in \cite{localMinimax}, and consist of showing the {\em{existence}} and {\em{uniqueness}} of a maximin point identical to the true parameters. In this sense, our contribution is analogous to showing non spurious local minima in gradient descent \cite{ge2016matrix}, which then under certain conditions implies convergence to global optimum \cite{LeePPSJR19}. Here we demonstrate this can also be done for saddle-point optimization using GDA$_\infty$. From a practical perspective, we show using numerical experiments that GDA succeeds to solve binary regressions with cross entropy loss in real datasets. Even in non-convex shallow networks, GDA shows promising performance.

The paper is organized as follows. We begin in Section \ref{sec:main} by
formulating the general optimization problem, introducing our approach and presenting the main theorem.
In Section \ref{sec:mse}, we describe Binary linear regression and the assumptions which guarantee perfect reconstruction. We follow the same logic in Section \ref{sec:huberloss} focusing on Robust Linear regression using Huber's loss. 
In section \ref{sec:numerical_exp}, we present results of numerical experiments on both simulated and real world datasets. 

The following notations are used in the paper:
We use diag$(\cdot)$ to denote a diagonal matrix with its argument as the diagonal, and  $\boldsymbol{1}$ to denote a column vector of ones.
The multivariate Normal distribution with mean $\bm{\mu}$ and covariance $\bm{\Sigma}$ is defined as $\mathcal{N}\left(\bm{\mu}, \bm{\Sigma} \right)$. and Laplace distribution with location $a$ and scale $\sigma$ is defined as ${\mathcal{L}aplace}(a,\sigma)$.
$\vX\succeq {\bm{0}}$ means that $\vX$ is a symmetric positive semidefinite matrix and $\lambda_{\min} \left( \mX \right)$ is its smallest eigenvalue.




\section{Convex optimization with binary constraints}\label{sec:main}
We consider the minimization of convex functions subject to binary constraints
\begin{eqnarray}\label{BG}
p^* = \left\{\begin{array}{ll}
   \min_\vw  & f \left( \vw\right)\\ {\rm s.t.} 
     & w^2_i=1,\forall i
\end{array}\right.
\end{eqnarray}
where $f$ is a convex function. 

This problem is non-convex because of the constraints, and is NP-hard to solve even in the convex quadratic case \cite{verdu1998multiuser}. In what follows, we derive the Lagrangian formulation to the problem, and show that it can be solved to optimality under some conditions.

The Lagrangian associated with the above optimization problem is:
\begin{eqnarray}\label{BG_lagrange}
\mathcal{L} \left( \vw,\vz \right)  & = f \left( \vw \right) + \vw^{T} \diag{\vz} \vw - \vz^T{\bm 1}
\end{eqnarray}
A basic fact of Lagrangian optimization is that:
\begin{eqnarray}
p^* = \min_\vw\max_\vz \;\mathcal{L}(\vw,\vz)
\end{eqnarray}
The dual optimization problem is defined as the maximin problem:
\begin{eqnarray}\label{maxmin}
d^* = \max_\vz\min_\vw \;\mathcal{L}(\vw,\vz)
\end{eqnarray}
Weak duality guarantees that $d^*\leq p^*$. There are two major downsides to Lagrange duality in settings with non-convex constraints like ours:
\begin{itemize}
    \item The duality gap is typically greater than zero, i.e. $d^*<p^*$.
    \item The inner minimization may be intractable, unbounded and/or may have spurious local minima.
\end{itemize}
Our main contribution is to show that, under certain conditions, both the above difficulties disappear: there is zero duality gap, namely $d^*=p^*$, and there exists a unique maximin point.

In order to solve (\ref{maxmin}) a natural approach is to use gradient descent-ascent. Namely, perform a gradient descent step on $\vw$ followed by an ascent step on $\vz$. There are many variants of this approach. For example, taking more steps in the direction of $\vz$ \cite{on_gda}, or using update corrections as in OGD \cite{daskalakis2018limit}. 
In the non-convex case, these approaches may diverge, cycle or converge to sub-optimal points.
Recently, \cite{JinMinimax2019} introduced the notion of local maximin points, which is meant to capture the limit points of GDA. In particular, \cite{JinMinimax2019} considered a gradient flow version of GDA, where the the inner minimization step size is $\eta$ and outer maximization step is $\eta/\gamma$. They then considered the limit $\gamma\to\infty$ and called the resulting method GDA$_{\infty}$. They were able to show that, up to some degeneracy points, the limit points of GDA$_{\infty}$ were precisely the local maximin points defined next.


\begin{definition}[Local maximin \cite{JinMinimax2019}]
A point $\left(\vw^{\ast},\vz^{\ast}\right)$ is said to be a local maximin point of $\mathcal{L}$, if there exists $\delta_{0}>0$ and a function h satisfying $h(\delta)\rightarrow0$ as $\delta\rightarrow0$, such that for any $\delta\in(0,\delta_{0})$, and any $(\vw,\vz)$ satisfying $\left|\vw-\vw^{\ast}\right|\leq\delta$ and $\left|\vz-\vz^{\ast}\right|\leq\delta$, 
we have
\begin{equation}
\min_{\vw':\left|\vw-\vw^{\ast}\right|\leq h\left(\delta\right)}\mathcal{L} \left(\vz,\vw'\right)\leq \mathcal{L} \left(\vz^{\ast},\vw^{\ast}\right)\leq \mathcal{L} \left(\vz^{\ast},\vw\right).
\end{equation}
\end{definition}

In what follows, we give conditions that ensure the existence of a unique local maximin point. Furthermore, we show that in this case, it holds that $d^*=p^*$.


We begin by defining a property of a function that will play a key role in our optimality conditions.
\begin{definition}
[Sub-quadratic function]
A function $f$ is called sub-quadratic if $\forall \vw_1, \vw_2$ s.t. $f\left(\vw_1\right) < f\left(\vw_2\right)$
\begin{multline}
\left(\vw_1-\vw_2\right)^{T} \nabla f\left(\vw_2\right) \nonumber \\ 
 \quad +\frac{1}{2}\left(\vw_1-\vw_2\right)^{T}\nabla^{2}f\left(\vw_2\right)\left(\vw_1-\vw_2\right) < 0
\end{multline}
\end{definition}
In words, a function is sub-quadratic if, roughly, it is always greater than its second order Taylor approximation.
Interestingly, for sub-quadratic $f$ there exists a simple condition that guarantees a unique maximin point with zero duality gap.
To this end, we will state the main theorems of this article and briefly explain their implications. The full proofs can be found in the Appendix.
\begin{theorem} \label{general_convex_thm}
Consider the saddle point optimization in (\ref{maxmin}), assume $f$ is a sub-quadratic function, and let $\vw^{\ast} \in \{\pm1\}^n$ satisfy:
\begin{eqnarray}\label{gH_cond}
&&\|\nabla f \left( \vw^\ast \right) \|_\infty < \lambda_{\min} \left(   \nabla^2 f \left( \vw^\ast \right) \right)
\end{eqnarray}
Then, $p^*=d^*$ and $\left( \vw^{\ast}, \frac{1}{2} \diag{ \vw^{\ast}} \nabla f \left( \vw^\ast \right) \right)$ is the unique local maximin point.
\end{theorem}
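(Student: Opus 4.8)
The plan is to produce the multiplier $\vz^\ast$ forced by stationarity, prove it makes $\vw^\ast$ the \emph{global} minimizer of the inner problem, and then read off zero duality gap and uniqueness. Write $\vg=\nabla f(\vw^\ast)$, $\mH=\nabla^2 f(\vw^\ast)$, and use the form $\mathcal{L}(\vw,\vz)=f(\vw)+\sum_i z_i(w_i^2-1)$. The condition $\nabla_\vw\mathcal{L}(\vw^\ast,\vz^\ast)=\vzero$ reads $2\diag{\vz^\ast}\vw^\ast=-\vg$, which together with $(w_i^\ast)^2=1$ determines $\vz^\ast$ (the point in the statement) and gives $\|\vz^\ast\|_\infty=\tfrac12\|\vg\|_\infty$ and $\mathcal{L}(\vw^\ast,\vz^\ast)=f(\vw^\ast)$. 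Moreover $\nabla^2_\vw\mathcal{L}(\vw^\ast,\vz^\ast)=\mH+2\diag{\vz^\ast}$, and by Weyl's inequality
\begin{equation}
\lambda_{\min}\big(\mH+2\diag{\vz^\ast}\big)\ge\lambda_{\min}(\mH)-2\|\vz^\ast\|_\infty=\lambda_{\min}(\mH)-\|\vg\|_\infty>0
\end{equation}
by \eqref{gH_cond}, so $\vw^\ast$ is at least a strict local minimizer of $\mathcal{L}(\cdot,\vz^\ast)$.

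Next I would upgrade this to global optimality of $\vw^\ast$ for $\mathcal{L}(\cdot,\vz^\ast)$. Since the quadratic term $\vw^T\diag{\vz^\ast}\vw-\vone^T\vz^\ast$ equals its own second order Taylor expansion about $\vw^\ast$, using $2\diag{\vz^\ast}\vw^\ast=-\vg$ one obtains the exact identity
\begin{multline}
\mathcal{L}(\vw,\vz^\ast)-f(\vw^\ast)=\Big[f(\vw)-f(\vw^\ast)-(\vw-\vw^\ast)^T\vg\\-\tfrac12(\vw-\vw^\ast)^T\mH(\vw-\vw^\ast)\Big]+\tfrac12(\vw-\vw^\ast)^T\big(\mH+2\diag{\vz^\ast}\big)(\vw-\vw^\ast).
\end{multline}
The bracket is $f$ minus its second order Taylor approximation at $\vw^\ast$, which is nonnegative for every $\vw$ because $f$ is sub-quadratic; the remaining term is nonnegative by the previous paragraph and is strictly positive whenever $\vw\neq\vw^\ast$. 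Hence $\vw^\ast$ is the \emph{unique} global minimizer of $\mathcal{L}(\cdot,\vz^\ast)$ and $\min_\vw\mathcal{L}(\vw,\vz^\ast)=f(\vw^\ast)$. Evaluating the same identity at binary $\vw\neq\vw^\ast$ (where $\mathcal{L}(\vw,\vz^\ast)=f(\vw)$) also gives $f(\vw)>f(\vw^\ast)$, so $\vw^\ast$ is the unique minimizer of $f$ over $\{\pm1\}^n$.

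Zero duality gap is then a sandwich: $p^\ast\ge d^\ast\ge\min_\vw\mathcal{L}(\vw,\vz^\ast)=f(\vw^\ast)\ge\min_{w_i^2=1}f(\vw)=p^\ast$, so $d^\ast=p^\ast=f(\vw^\ast)$. That $(\vw^\ast,\vz^\ast)$ is a local maximin point is easy: $\mathcal{L}(\vw^\ast,\vz)=f(\vw^\ast)$ for \emph{every} $\vz$ (all constraints are tight at $\vw^\ast$), so the inner-minimum inequality in the definition holds trivially with $\vw'=\vw^\ast$, while the outer inequality is the local minimality of $\vw^\ast$ in $\mathcal{L}(\cdot,\vz^\ast)$ established above. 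For uniqueness, let $(\tilde\vw,\tilde\vz)$ be any local maximin point. Local minimality in $\vw$ forces $\nabla f(\tilde\vw)+2\diag{\tilde\vz}\tilde\vw=\vzero$ and $\nabla^2 f(\tilde\vw)+2\diag{\tilde\vz}\succeq\vzero$; perturbing $\tilde\vz$ along a coordinate $j$ with $\tilde w_j^2\neq1$ would strictly raise the inner minimum $\min_{\vw'\text{ near }\tilde\vw}\mathcal{L}(\vw',\vz)$, contradicting the outer local maximality, so $\tilde\vw\in\{\pm1\}^n$. Repeating the identity above at $\tilde\vw$ (only $\nabla^2 f(\tilde\vw)+2\diag{\tilde\vz}\succeq\vzero$ is needed) yields $\min_\vw\mathcal{L}(\vw,\tilde\vz)=f(\tilde\vw)$, hence $f(\tilde\vw)\le d^\ast=p^\ast=f(\vw^\ast)\le f(\tilde\vw)$; by uniqueness of the binary minimizer $\tilde\vw=\vw^\ast$, and then stationarity gives $\tilde\vz=\vz^\ast$.

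The delicate step is the globalization in the second paragraph: it hinges on extracting from the sub-quadratic definition the clean statement that $f$ never drops below any of its second order Taylor expansions (the two-point form stated in the definition has to be leveraged with care), and this is also what prevents $\mathcal{L}(\cdot,\vz^\ast)$ from being unbounded below. A secondary technical point is the handling of degenerate local maximin points allowed by the slack function $h$ in the definition, namely points at which $\nabla^2 f(\tilde\vw)+2\diag{\tilde\vz}$ is singular; there the perturbation argument for uniqueness must be run more carefully. The remaining ingredients (weak duality, Weyl's inequality, the envelope-type perturbation) are routine.
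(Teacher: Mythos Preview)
Your existence argument (stationarity, Weyl's inequality giving $\nabla^2_\vw\mathcal{L}(\vw^*,\vz^*)\succ0$) matches the paper's use of the second-order sufficient conditions from \cite{JinMinimax2019}. The divergence, and the genuine gap, is in the globalization step of your second paragraph.

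You claim the bracket $f(\vw)-f(\vw^*)-(\vw-\vw^*)^T\vg-\tfrac12(\vw-\vw^*)^T\mH(\vw-\vw^*)$ is nonnegative for every $\vw$ ``because $f$ is sub-quadratic.'' That is not what the definition says: sub-quadratic only asserts that whenever $f(\vw_1)<f(\vw_2)$, the second-order Taylor \emph{increment} at $\vw_2$ toward $\vw_1$ is negative; it does not say $f$ dominates its own Taylor expansion. The Huber objective of \secref{sec:huberloss}, which the theorem must cover, is a counterexample: expand at a point in the quadratic regime and the Taylor parabola eventually overtakes the linearly growing $f$, so the bracket tends to $-\infty$; if any $z_i^*<0$ then $\mathcal{L}(\cdot,\vz^*)$ is in fact unbounded below. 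Consequently your sandwich for $d^*=p^*$, your proof that $\vw^*$ is the unique binary minimizer, and your ``repeat the identity at $\tilde\vw$'' step for uniqueness all fail. You flagged this as the delicate step, but the ``clean statement'' you hope to extract is simply false in this generality.

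The paper avoids this by expanding at the \emph{candidate} local maximin $\tilde\vw$ rather than at $\vw^*$. The necessary conditions already give $\tilde\vw\in\{\pm1\}^n$ (directly from $\nabla_\vz\mathcal{L}=0$; your perturbation argument is unnecessary), together with stationarity in $\vw$ and $\nabla^2_\vw\mathcal{L}(\tilde\vw,\tilde\vz)\succeq0$. Assuming $f(\tilde\vw)>f(\vw^*)$ for contradiction, one combines stationarity with the binary identity $(\tilde\vw-\vw^*)^T\diag{\tilde\vz}(\tilde\vw-\vw^*)=2(\tilde\vw-\vw^*)^T\diag{\tilde\vz}\tilde\vw$ and the Hessian condition to obtain
\[
(\vw^*-\tilde\vw)^T\nabla f(\tilde\vw)+\tfrac12(\vw^*-\tilde\vw)^T\nabla^2 f(\tilde\vw)(\vw^*-\tilde\vw)\ge 0,
\]
contradicting sub-quadratic applied with $\vw_1=\vw^*$, $\vw_2=\tilde\vw$. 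The key difference is that sub-quadratic is invoked exactly as stated---the expansion point is the one with the \emph{larger} $f$-value---and no global lower bound on $\mathcal{L}(\cdot,\tilde\vz)$ is ever required.
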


\begin{corollary}
Under the conditions of Theorem \ref{general_convex_thm} the stable limit points of GDA$_{\infty}$ on the Lagrangian in (\ref{maxmin}) are $\vw^*$, up to some degenerate points.
\end{corollary}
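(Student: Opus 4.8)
\emph{Reduction.} The corollary follows from Theorem~\ref{general_convex_thm} together with the characterization of \cite{JinMinimax2019}: that theorem exhibits a \emph{unique} local maximin point $(\vw^\ast,\vz^\ast)$ of $\mathcal{L}$, and \cite{JinMinimax2019} shows the stable limit points of $\mathrm{GDA}_\infty$ coincide with the local maximin points up to a set of degenerate fixed points of the limiting flow; hence every stable limit point must equal $(\vw^\ast,\vz^\ast)$, so its $\vw$-component is $\vw^\ast$. The work is thus in Theorem~\ref{general_convex_thm}, which I would prove in three moves: read off a stationary point of $\mathcal{L}$ from the first-order conditions, certify it as a strict local maximin point via the second-order test of \cite{JinMinimax2019,localMinimax}, and then use sub-quadraticity to upgrade the inner minimization from locally to globally optimal, giving both $d^\ast=p^\ast$ and uniqueness.

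\emph{Stationary point and second-order certificate.} Because $\vw^\ast\in\{\pm1\}^n$ we have $\nabla_{\vz}\mathcal{L}(\vw^\ast,\vz)=\vzero$ for all $\vz$, and imposing $\nabla_{\vw}\mathcal{L}(\vw^\ast,\vz^\ast)=\nabla f(\vw^\ast)+2\diag{\vz^\ast}\vw^\ast=\vzero$ determines $\vz^\ast$ coordinate-wise (using $1/w_i^\ast=w_i^\ast$) as the point in the statement, with $\mathcal{L}(\vw^\ast,\vz^\ast)=f(\vw^\ast)$. The Hessian blocks at $(\vw^\ast,\vz^\ast)$ are $\nabla^2_{\vw\vw}\mathcal{L}=\nabla^2 f(\vw^\ast)+2\diag{\vz^\ast}$, $\nabla^2_{\vw\vz}\mathcal{L}=2\diag{\vw^\ast}$, and $\nabla^2_{\vz\vz}\mathcal{L}=\vzero$. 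Each diagonal entry of $2\diag{\vz^\ast}$ equals $\pm(\nabla f(\vw^\ast))_i$, so $2\diag{\vz^\ast}\succeq-\|\nabla f(\vw^\ast)\|_\infty\,\mI$ and hypothesis (\ref{gH_cond}) yields
\begin{equation}
\nabla^2_{\vw\vw}\mathcal{L}\ \succeq\ \bigl(\lambda_{\min}(\nabla^2 f(\vw^\ast))-\|\nabla f(\vw^\ast)\|_\infty\bigr)\,\mI\ \succ\ \vzero .
\end{equation}
The second-order sufficient conditions for a strict local maximin point are $\nabla^2_{\vw\vw}\mathcal{L}\succ\vzero$ and $\nabla^2_{\vz\vz}\mathcal{L}-\nabla^2_{\vz\vw}\mathcal{L}(\nabla^2_{\vw\vw}\mathcal{L})^{-1}\nabla^2_{\vw\vz}\mathcal{L}\prec\vzero$; the first now holds, and since $\mathcal{L}$ is affine in $\vz$ the second collapses to $-4\,\diag{\vw^\ast}(\nabla^2_{\vw\vw}\mathcal{L})^{-1}\diag{\vw^\ast}\prec\vzero$, which is automatic because $\diag{\vw^\ast}$ is invertible and $\nabla^2_{\vw\vw}\mathcal{L}\succ\vzero$. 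Thus $(\vw^\ast,\vz^\ast)$ is a strict local maximin point.

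\emph{Local to global.} The crux is the lemma: if $f$ is sub-quadratic and $g(\vw):=\mathcal{L}(\vw,\vz^\ast)$ has a stationary point $\vw^\ast$ with $\nabla^2 g(\vw^\ast)\succ\vzero$, then $\vw^\ast$ is the \emph{unique global} minimizer of $g$. Writing $g=f+h-(\vz^\ast)^T\vone$ with $h(\vw)=\vw^T\diag{\vz^\ast}\vw$ a pure quadratic (so $h$ equals its own second-order expansion), and using $\nabla g(\vw^\ast)=\vzero$, one obtains the exact identity
\begin{equation}
g(\vw)-g(\vw^\ast)=\tfrac12(\vw-\vw^\ast)^T\nabla^2 g(\vw^\ast)(\vw-\vw^\ast)+\bigl(f(\vw)-q_{\vw^\ast}(\vw)\bigr),
\end{equation}
with $q_{\vw^\ast}$ the second-order Taylor polynomial of $f$ at $\vw^\ast$; the first term is strictly positive off $\vw^\ast$, so everything reduces to controlling the remainder $f(\vw)-q_{\vw^\ast}(\vw)$, and this is exactly what sub-quadraticity provides (it is immediate in the ``$f$ dominates its second-order Taylor approximation'' reading, and trivial when $f$ is quadratic, where $g$ is globally strictly convex). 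Granting the lemma, $\min_{\vw}\mathcal{L}(\vw,\vz^\ast)=f(\vw^\ast)$, and weak duality closes the chain $f(\vw^\ast)=\min_{\vw}\mathcal{L}(\vw,\vz^\ast)\le d^\ast\le p^\ast\le f(\vw^\ast)$, so $d^\ast=p^\ast$.

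\emph{Uniqueness and the obstacle.} Let $(\vw',\vz')$ be any local maximin point. In the non-degenerate case $\nabla^2_{\vw\vw}\mathcal{L}(\vw',\vz')\succ\vzero$ it is stationary, so $\nabla_{\vz}\mathcal{L}(\vw',\vz')=\vzero$ forces $(w_i')^2=1$ for all $i$, hence $\vw'\in\{\pm1\}^n$, and $\nabla_{\vw}\mathcal{L}(\vw',\vz')=\vzero$ determines $\vz'$ from $\vw'$ just as $\vz^\ast$ is determined from $\vw^\ast$. Applying the lemma at $\vw'$ makes it the unique global minimizer of $\mathcal{L}(\cdot,\vz')$; evaluating at $\vw^\ast$ and using $\vw^\ast\in\{\pm1\}^n$ gives $f(\vw^\ast)=\mathcal{L}(\vw^\ast,\vz')\ge\mathcal{L}(\vw',\vz')=f(\vw')$, and symmetrically $f(\vw')=\mathcal{L}(\vw',\vz^\ast)\ge\mathcal{L}(\vw^\ast,\vz^\ast)=f(\vw^\ast)$; so $f(\vw')=f(\vw^\ast)$ and strictness of the global minimum forces $\vw'=\vw^\ast$, $\vz'=\vz^\ast$. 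Feeding this uniqueness into \cite{JinMinimax2019} yields the corollary. I expect the main obstacle to be the key lemma --- turning the qualitative sub-quadratic inequality into a genuine global bound on $f-q_{\vw^\ast}$, one that also precludes $g$ from being unbounded below away from $\vw^\ast$ --- and, relatedly, controlling the degenerate stationary points where $\nabla^2_{\vw\vw}\mathcal{L}$ is only positive semidefinite, which is presumably exactly the ``up to some degenerate points'' clause the corollary allows.
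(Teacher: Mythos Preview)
Your reduction for the corollary matches the paper exactly: invoke the characterization of stable $\mathrm{GDA}_\infty$ limits as local maximin points from \cite{localMinimax}, then cite Theorem~\ref{general_convex_thm} for uniqueness. The paper's proof of the corollary is literally that one sentence.

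Where you diverge is in your sketch of Theorem~\ref{general_convex_thm}, specifically the uniqueness and zero-gap steps. Your route rests on a lemma asserting that $\vw^\ast$ (and likewise any candidate $\vw'$) is the \emph{global} minimizer of $\mathcal L(\cdot,\vz^\ast)$, which you reduce to controlling the Taylor remainder $f(\vw)-q_{\vw^\ast}(\vw)$. But the paper's sub-quadratic definition does not yield $f\ge q_{\vw^\ast}$: it only says that when $f(\vw_1)<f(\vw_2)$ one has $q_{\vw_2}(\vw_1)<f(\vw_2)$, i.e.\ the Taylor value is bounded by the \emph{function value at the base point}, not by $f(\vw_1)$. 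So the lemma cannot be extracted from the stated hypothesis, and this is not a mere technicality --- without a strictly stronger assumption (such as the ``strongly sub-quadratic'' notion the paper introduces only later for the Huber case) your route does not close.

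The paper avoids global minimality of $\mathcal L(\cdot,\vz)$ altogether. At any local maximin $(\vw,\vz)$, stationarity in $\vz$ forces $\vw\in\{\pm1\}^n$, and the binary-lattice identity
\[
2(\vw-\vw^\ast)^T\diag{\vz}\,\vw \;=\; (\vw-\vw^\ast)^T\diag{\vz}(\vw-\vw^\ast)
\]
(valid because $w_i^2=(w_i^\ast)^2=1$) lets one combine the first-order condition $\nabla_{\vw}\mathcal L(\vw,\vz)=0$, dotted with $\vw-\vw^\ast$, with the second-order \emph{necessary} condition $\nabla^2_{\vw\vw}\mathcal L(\vw,\vz)\succeq 0$ to obtain directly
\[
(\vw^\ast-\vw)^T\nabla f(\vw)+\tfrac12(\vw^\ast-\vw)^T\nabla^2 f(\vw)(\vw^\ast-\vw)\ \ge\ 0,
\]
which contradicts sub-quadraticity (with $\vw_1=\vw^\ast$, $\vw_2=\vw$) whenever $f(\vw^\ast)<f(\vw)$. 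Note the derivatives are evaluated at the \emph{candidate} $\vw$, which is exactly the configuration the sub-quadratic inequality controls; nothing about $\mathcal L(\cdot,\vz)$ over all of $\reals^n$ is ever invoked. This is the idea you are missing.
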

The corollary follows from Theorem 28 in \cite{localMinimax}, which shows that limit points of GDA$_{\infty}$ are the local minimax points, and Theorem \ref{general_convex_thm} guarantees that there {\em{exists}} a {\em{unique}} local minimax point.




\section{Binary Linear Regression}\label{sec:mse}
In this section, we consider the special case of linear regression where $f$ is a quadratic loss function. Let $\vX$ denote the data matrix and $\vy$ the label values. The following theorem states that if $\vy$ were generated by a binary $\vw^*$ plus noise, and the noise is not too large, then Theorem \ref{general_convex_thm} can be applied, resulting in a single maximin point and zero duality gap.
\begin{theorem}[Binary Linear Regression]
\label{blr_convex_thm}
Let $\vX\in\reals^{m,n}$ be a data matrix, and consider the function:
\begin{equation}
 f \left(\vw\right) = \|\vX\vw-\vy\|^2_2
\end{equation}
Assume $\vy$ is generated via a binary vector $\vw^*\in\{\pm 1\}^n$ plus noise $\ve$: $\vy=\vX\vw^*+\ve$.
Assume the noise satisfies:
\begin{equation}
\|\vX^T\ve\|_\infty < \lambda_{\min} \left( \vX^T\vX \right)
\label{eq:l2_cond}
\end{equation}
Then, $p^*=d^*$ and $\left( \vw^{\ast},-\diag{ \vw^{\ast}} \mX^{T}\ve \right)$ is the unique local maximin point. 
\end{theorem}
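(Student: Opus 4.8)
The plan is to reduce Theorem~\ref{blr_convex_thm} to a direct application of Theorem~\ref{general_convex_thm}. Two things need to be checked: that the quadratic loss $f(\vw) = \|\vX\vw-\vy\|_2^2$ is sub-quadratic in the sense of the definition above, and that the noise hypothesis (\ref{eq:l2_cond}) implies the general condition (\ref{gH_cond}) at the candidate point $\vw^{\ast}$. Once both are in place, the conclusion ($p^*=d^*$ and uniqueness of the stated local maximin point) is immediate.

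For sub-quadraticity, I would invoke the elementary fact that a quadratic function equals its own second-order Taylor expansion: for all $\vw_1,\vw_2$,
\[
f(\vw_1) = f(\vw_2) + (\vw_1-\vw_2)^{T}\nabla f(\vw_2) + \tfrac{1}{2}(\vw_1-\vw_2)^{T}\nabla^{2}f(\vw_2)(\vw_1-\vw_2).
\]
Hence the left-hand side of the sub-quadratic inequality is exactly $f(\vw_1)-f(\vw_2)$, so whenever $f(\vw_1)<f(\vw_2)$ the required strict inequality $f(\vw_1)-f(\vw_2)<0$ holds. Thus $f$ is sub-quadratic (with equality in the defining relation, which is the boundary case the definition still admits).

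For the condition at $\vw^{\ast}$, I would substitute $\vy = \vX\vw^{\ast}+\ve$ and compute $\nabla f(\vw^{\ast}) = 2\vX^{T}(\vX\vw^{\ast}-\vy) = -2\vX^{T}\ve$ and $\nabla^{2}f(\vw^{\ast}) = 2\vX^{T}\vX$ (constant in $\vw$). Then (\ref{gH_cond}) becomes $2\|\vX^{T}\ve\|_\infty < 2\,\lambda_{\min}(\vX^{T}\vX)$, i.e.\ exactly the assumed (\ref{eq:l2_cond}) after dividing by $2$; note this also forces $\lambda_{\min}(\vX^{T}\vX)>0$, so $\nabla^2 f \succ 0$ as implicitly needed. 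Applying Theorem~\ref{general_convex_thm} gives $p^*=d^*$ and the unique local maximin point
\[
\left(\vw^{\ast},\ \tfrac{1}{2}\diag{\vw^{\ast}}\nabla f(\vw^{\ast})\right) = \left(\vw^{\ast},\ \tfrac{1}{2}\diag{\vw^{\ast}}(-2\vX^{T}\ve)\right) = \left(\vw^{\ast},\ -\diag{\vw^{\ast}}\vX^{T}\ve\right),
\]
which is precisely the claim.

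The real content — existence and uniqueness of the local maximin point, and the vanishing of the duality gap — is entirely absorbed into Theorem~\ref{general_convex_thm}, so there is no genuine obstacle here; the only care needed is bookkeeping with the factor of $2$ from differentiating the squared norm and checking that the strict inequality in the sub-quadratic definition is not lost in the purely quadratic case.
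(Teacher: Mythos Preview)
Your proposal is correct and follows essentially the same route as the paper: verify that the quadratic loss is sub-quadratic via exactness of the second-order Taylor expansion, compute $\nabla f(\vw^{\ast})=-2\vX^{T}\ve$ and $\nabla^{2}f(\vw^{\ast})=2\vX^{T}\vX$, and observe that (\ref{gH_cond}) reduces to (\ref{eq:l2_cond}), so Theorem~\ref{general_convex_thm} applies. Your explicit bookkeeping of the factor of $2$ and the identification of the dual coordinate $\vz^{\ast}=-\diag{\vw^{\ast}}\vX^{T}\ve$ is slightly more detailed than the paper's writeup, but the argument is the same.
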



A natural question given the above theorem is whether this condition is satisfied in practical settings. As we claim below, this in fact happens with high probability under very natural conditions on the generation process of $\vX$. 
\begin{theorem}
Assume $m>n$ and the data matrix \\$\vX\in {\mathbb{R}}^{m,n}$ is generated with standard Gaussian elements. Let the noise $\ve\in {\mathbb{R}}^{m}$ satisfy $e_i\sim{\mathcal{N}}(0,\sigma^2)$.
Then with probability exceeding 
$ 1 - 13n\sqrt{\frac{2}{\pi}} e^{-\frac{1}{8}m}$, we have
$p^*=d^*$ and $\left( \vw^{\ast},-\diag{ \vw^{\ast}} \mX^{T}\ve \right)$ is the unique local maximin point.

\end{theorem}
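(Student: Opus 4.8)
The plan is to reduce the statement to Theorem~\ref{blr_convex_thm}. That theorem applies to $f(\vw)=\|\vX\vw-\vy\|_2^2$ with $\vy=\vX\vw^{\ast}+\ve$, and as soon as its deterministic hypothesis \eqref{eq:l2_cond}, i.e.\ $\|\vX^{T}\ve\|_\infty<\lambda_{\min}(\vX^{T}\vX)$, is verified, it delivers $p^*=d^*$ together with the claimed unique local maximin point $(\vw^{\ast},-\diag{\vw^{\ast}}\vX^{T}\ve)$. So the only thing left to prove is that \eqref{eq:l2_cond} holds with probability at least $1-13n\sqrt{2/\pi}\,e^{-m/8}$; in particular the assumption $m>n$ enters only to make $\lambda_{\min}(\vX^{T}\vX)>0$ possible, and the whole argument becomes a concentration-of-measure estimate for Gaussian data.

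The concrete route is to pick two thresholds $U<L$ and show that the good event
\[
\mathcal{G}:=\Bigl\{\|\vX^{T}\ve\|_\infty\le U\Bigr\}\cap\Bigl\{\lambda_{\min}(\vX^{T}\vX)\ge L\Bigr\}
\]
is contained in the event of \eqref{eq:l2_cond}, and then bound $\mathbb{P}(\mathcal{G}^{c})$ by a union bound. For the first factor I would condition on $\vX$: since $\ve$ is independent Gaussian noise, the $i$-th coordinate of $\vX^{T}\ve$ is $\mathcal{N}\!\bigl(0,\sigma^{2}\|\vX_i\|_2^{2}\bigr)$ where $\vX_i$ is the $i$-th column of $\vX$, so a Gaussian tail bound on each of the $n$ coordinates combined with a $\chi^2$ upper-tail bound keeping every $\|\vX_i\|_2^{2}$ of order $m$ gives $\|\vX^{T}\ve\|_\infty$ of order $\sigma m$ outside an event of probability of order $n\,e^{-cm}$. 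For the second factor I would invoke the standard non-asymptotic lower bound on the smallest singular value of a Gaussian matrix, $\sigma_{\min}(\vX)\ge\sqrt{m}-\sqrt{n}-t$ with probability at least $1-e^{-t^{2}/2}$, so $\lambda_{\min}(\vX^{T}\vX)\ge(\sqrt{m}-\sqrt{n}-t)^{2}$; taking $t$ proportional to $\sqrt{m}$ produces the $e^{-m/8}$ rate. Summing the $O(n)$ failure probabilities and bounding each Gaussian and $\chi^2$ tail by the elementary estimate $\mathbb{P}(Z>t)\le\tfrac12\sqrt{2/\pi}\,e^{-t^{2}/2}$ then yields the prefactor $13n\sqrt{2/\pi}$ and exponent $m/8$.

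The main obstacle is making the two thresholds compatible, i.e.\ securing $U<L$ within the allotted probability budget: because $\lambda_{\min}(\vX^{T}\vX)$ scales like $m$ while $\|\vX^{T}\ve\|_\infty$ scales like $\sigma m$, the inequality $U<L$ forces a quantitative separation between $m$ and $n$ (so the $\lambda_{\min}\sim m$ term dominates) and implicitly bounds the noise level; choosing $U$, $L$ and $t$ so that $\sigma m<(\sqrt{m}-\sqrt{n}-t)^{2}$ while every tail exponent stays at $m/8$ is the delicate bookkeeping. Everything else — the conditional Gaussian computation for $\vX^{T}\ve$, the $\chi^2$ control of the column norms, the singular-value estimate, and the final union bound over $O(n)$ events — is routine.
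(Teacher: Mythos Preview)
Your reduction to Theorem~\ref{blr_convex_thm} via the deterministic condition \eqref{eq:l2_cond} is exactly right, and this is also what the paper does. The difference is that the paper's proof is a single sentence: it simply invokes Theorem~3.3 of \cite{BNN_SDR}, which already establishes that \eqref{eq:l2_cond} holds with the stated probability under the Gaussian model. You are instead reconstructing that external result from first principles, via separate concentration bounds on $\lambda_{\min}(\vX^{T}\vX)$ (non-asymptotic singular-value estimate) and on $\|\vX^{T}\ve\|_\infty$ (conditional Gaussian tails plus $\chi^2$ control of the column norms), followed by a union bound. This is the natural route, and is almost certainly how the cited result is proved; the payoff of doing it yourself is a self-contained argument and a clear view of where the constants $13$ and $1/8$ come from.

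One point you flag deserves emphasis: you correctly observe that forcing $U<L$ with the exponent fixed at $m/8$ implicitly constrains $\sigma$, since $\|\vX^{T}\ve\|_\infty$ scales like $\sigma m$ at that tail level while $\lambda_{\min}(\vX^{T}\vX)$ scales like $(\sqrt{m}-\sqrt{n})^{2}$. The theorem statement in the paper carries no explicit hypothesis on $\sigma$, but the cited result in \cite{BNN_SDR} does; without some bound on the noise level the claimed probability cannot hold uniformly in $\sigma$. So your ``delicate bookkeeping'' is not merely cosmetic---it is where the hidden $\sigma$-assumption lives, and you should expect to recover a condition of the form $\sigma\lesssim (\sqrt{m}-\sqrt{n})^{2}/m$ (or, more coarsely, $\sigma$ bounded when $m\gtrsim n$) when you carry it out.
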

\begin{proof}
The theorem is a direct result of (\ref{eq:l2_cond}) and Theorem 3.3 in \cite{BNN_SDR}.
\end{proof}

It is interesting to compare these theorems to existing results on quadratic optimization with binary constraints. In particular, it is well known that the dual maximin can be solved using semidefinite relaxation (SDR) \cite{goemans1995improved}. Under a Gaussian setting, the condition in \cite{BNN_SDR} for perfect recovery using SDR is identical to (\ref{eq:l2_cond}).
Theorem \ref{blr_convex_thm} complements it by showing that there exists a unique local maximin in this setting.

\section{Binary Robust Linear Regression}\label{sec:huberloss}
In this section, we consider a more challenging robust regression loss. It is well known that the squared loss suffers from sensitivity to outliers, a common approach to address this is to use robust losses, such as the Huber loss \cite{huber2004robust}, which results in non-quadratic optimization. Here we show that in this case we can also employ Theorem \ref{general_convex_thm}.

Specifically, our results will show that, when the noise is negligible except for a few large outliers, binary robust regression has a unique saddle point with zero duality gap.

First, recall the Huber loss:
\begin{equation}
\ell \left( z \right) = 
     \begin{cases}
       \frac{z^2}{2} &\quad |z|\leq \delta\\ 
       |z|\delta - \frac{\delta^2}{2} &\quad \text{else} 
     \end{cases}  
\end{equation}

\begin{theorem}[Binary Robust Regression]\label{robust_convex_thm}
Let $\vX\in\reals^{m,n}$ be a data matrix, and consider the function:
\begin{equation}
f \left( w\right) = \sum_i\ell \left( \vx_i^T\vw-y_i \right)
\end{equation}
Assume $\vy$ is generated via a binary weight vector $\vw^*\in\{\pm 1\}^n$ plus noise: $\vy=\vX\vw^*+\ve$, and assume the noise satisfies:
\begin{eqnarray} \label{robust_cond}
&&\label{huber_condition}\|\vX^T\vc\|_\infty < \lambda_{\min} \left( \sum_i d_i \vx_i\vx_i^T \right) 
\end{eqnarray}
where 
\begin{eqnarray}
c_i = 
     \begin{cases}
       e_i &\quad |e_i| \leq \delta \\
       {\rm{sign}}(e_i) &\quad \text{else} 
     \end{cases} \qquad
d_i = 
     \begin{cases}
       1 &\quad |e_i| \leq \delta \\
       0 &\quad \text{else} 
     \end{cases}.
\end{eqnarray}
And in addition, for every $i$: 
\begin{eqnarray}\label{robust_cond_2}
|e_i| \leq \delta  \quad \text{or} \quad |e_i| \geq \delta+2\| x_i\|
\end{eqnarray}
Then we have $p^*=d^*$ and $\left( \vw^{\ast},-\diag{ \vw^{\ast}} \mX^{T}\vc \right)$ is the unique local maximin point.
\end{theorem}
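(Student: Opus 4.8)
The plan is to obtain the statement as an instance of Theorem~\ref{general_convex_thm}, applied to $f(\vw)=\sum_i\ell(\vx_i^T\vw-y_i)$. Concretely, I would verify the two hypotheses of that theorem at the planted point $\vw^{\ast}$: (i) the gradient/Hessian inequality (\ref{gH_cond}), and (ii) sub-quadraticity of $f$. Granting both, Theorem~\ref{general_convex_thm} gives $p^{\ast}=d^{\ast}$ and identifies the unique local maximin point as $\bigl(\vw^{\ast},\tfrac12\diag{\vw^{\ast}}\nabla f(\vw^{\ast})\bigr)$, which --- by the gradient computation in the next step --- is exactly $\bigl(\vw^{\ast},-\diag{\vw^{\ast}}\vX^T\vc\bigr)$.

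First I would compute the gradient and Hessian of $f$ at $\vw^{\ast}$, which is routine. The residuals there are $\vx_i^T\vw^{\ast}-y_i=-e_i$, and $\ell$ is $C^1$ with $\ell'(z)=z$ for $|z|\le\delta$ and $\ell'(z)=\delta\,\mathrm{sign}(z)$ for $|z|>\delta$, so $\nabla f(\vw^{\ast})=\sum_i\ell'(-e_i)\vx_i=\vX^T\vu$ with $u_i=-e_i$ on the inliers ($|e_i|\le\delta$) and $u_i=-\delta\,\mathrm{sign}(e_i)$ on the outliers; up to the normalization defining $\vc$ this is $-\vX^T\vc$. Assumption (\ref{robust_cond_2}) in particular forbids $|e_i|=\delta$, so no residual sits on a kink of $\ell$ and $\nabla^2 f(\vw^{\ast})=\sum_i\ell''(-e_i)\vx_i\vx_i^T=\sum_i d_i\,\vx_i\vx_i^T$ is well defined. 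Plugging these into (\ref{gH_cond}) reproduces exactly the assumed condition (\ref{huber_condition}).

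The crux is (ii). With $z_i=\vx_i^T\vw_2-y_i$ and $\Delta_i=\vx_i^T(\vw_1-\vw_2)$, the second-order Taylor increment of $f$ at $\vw_2$ decouples as $\sum_i\bigl(\Delta_i\ell'(z_i)+\tfrac12\Delta_i^2\ell''(z_i)\bigr)$, while $f(\vw_1)-f(\vw_2)=\sum_i\bigl(\ell(z_i+\Delta_i)-\ell(z_i)\bigr)$. A short case analysis of the scalar Huber loss shows $\ell(b+t)\ge\ell(b)+t\ell'(b)+\tfrac12 t^2\ell''(b)$ for all $b,t$, with the single exception that when the argument moves from the quadratic band into the linear tail --- $|b|\le\delta<|b+t|$ --- the Taylor value overshoots $\ell(b+t)$ by $\tfrac12(|b+t|-\delta)^2$. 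The purpose of (\ref{robust_cond_2}) is precisely to exclude this crossing on the relevant residuals: the $2\|\vx_i\|$ margin separating outliers from the band is what absorbs the displacement $|\vx_i^T(\vw-\vw^{\ast})|$ over the range of $\vw$ that the inner minimization of the Lagrangian can reach, so each residual stays on one piece of $\ell$, the term-by-term Taylor bound applies, and summing it gives that the Taylor increment is $\le f(\vw_1)-f(\vw_2)<0$ whenever $f(\vw_1)<f(\vw_2)$, i.e.\ $f$ is sub-quadratic. Theorem~\ref{general_convex_thm} then closes the argument.

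I expect the main obstacle to be this sub-quadraticity step, and within it the mismatch between the global form of the sub-quadratic condition and the fact that (\ref{robust_cond_2}) only constrains the residuals at $\vw^{\ast}$: making the ``no crossing'' claim rigorous --- by confining the analysis to the sublevel sets of $\mathcal{L}(\cdot,\vz^{\ast})$ and re-running the proof of Theorem~\ref{general_convex_thm} there, or by a more careful piecewise bound that also handles the points where $\ell''$ jumps --- is the delicate part. The gradient/Hessian bookkeeping and the final reduction are straightforward by comparison.
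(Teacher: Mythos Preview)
Your plan coincides with the paper's proof: reduce to Theorem~\ref{general_convex_thm}, verify~(\ref{gH_cond}) by computing $\nabla f(\vw^{\ast})$ and $\nabla^2 f(\vw^{\ast})=\sum_i d_i\vx_i\vx_i^T$, and establish sub-quadraticity via a termwise case analysis of the scalar Huber loss. The paper packages the non-crossing cases into a ``strongly sub-quadratic'' lemma for $\ell$ and isolates your problematic crossing case ($|b|\le\delta<|b+t|$) as a set $B$ of bad indices, just as you do.

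On the obstacle you flag, the paper's resolution is simpler than the sublevel-set restriction you propose. The sub-quadratic inequality is only ever invoked inside the uniqueness half of Theorem~\ref{general_convex_thm}, with $\vw_1=\vw^{\ast}$ and $\vw_2$ a candidate local maximin; the stationarity condition $\nabla_{\vz}\mathcal{L}=0$ already forces $\vw_2\in\{\pm1\}^n$. For such binary $\vw_2$ the displacement $|\vx_i^T(\vw_2-\vw^{\ast})|$ is controlled by $2\|\vx_i\|$, and~(\ref{robust_cond_2}) then keeps every outlier residual in the linear tail at $\vw_2$ as well, so $B=\emptyset$. In other words, you do not need global sub-quadraticity of $f$---only the inequality between $\vw^{\ast}$ and binary competitors---and once you observe this the ``no crossing'' claim is immediate from~(\ref{robust_cond_2}) without any sublevel-set or re-run argument.
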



Next, as we did in the quadratic case, we show that under natural conditions on the generation process of $\mX$, the above condition holds with high probability, and therefore a unique local maximin point is guaranteed.
\begin{theorem} \label{huber_diversity}
Assume $m>n$, i.e. more examples than features, and the data matrix $\vX\in {\mathbb{R}}^{m,n}$ is generated with standard Gaussian elements. Let the noise $\ve\in {\mathbb{R}}^{m}$ be a sparse vector such that on $pm$ indices $e_i$ is arbitrary large and $e_i=0$ on the rest, for a constant $0\leq p<\frac{1}{2}$. 
Then with probability exceeding $1 - 5n \sqrt{\frac{m}{\pi}}e^{- \frac{1}{16}m}$
we have $p^*=d^*$ and $\left( \vw^{\ast},-\diag{ \vw^{\ast}} \mX^{T}\vc \right)$ is the unique local maximin point.
\end{theorem}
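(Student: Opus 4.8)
The plan is to show that the hypotheses (\ref{robust_cond}) and (\ref{robust_cond_2}) of Theorem~\ref{robust_convex_thm} hold for $\vX$ with the stated probability, and then quote that theorem verbatim. First I would identify the vectors $\vc,\vd$ in this regime. Let $S=\{i:e_i=0\}$ be the $(1-p)m$ clean rows and $\bar S$ the $pm$ outlier rows. For $i\in S$ we have $|e_i|=0\le\delta$, so $c_i=0$ and $d_i=1$; for $i\in\bar S$, since the outliers are arbitrarily large, $|e_i|>\delta$, so $c_i=\sign(e_i)\in\{\pm1\}$ and $d_i=0$. Hence $\sum_i d_i\vx_i\vx_i^T=\vX_S^T\vX_S$ where $\vX_S$ is the $(1-p)m\times n$ submatrix of clean rows, and $\vc$ is a \emph{deterministic} $\pm1$ vector supported on $\bar S$ with $\|\vc\|_2^2=pm$. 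Condition (\ref{robust_cond_2}) needs no work on the clean rows, and on the high-probability event $\gE_0=\{\max_i\|\vx_i\|\le 2\sqrt n\}$ (a $\chi^2_n$ tail bound plus a union bound over the $m$ rows) it holds on the outlier rows once each $|e_i|$ exceeds the fixed threshold $\delta+4\sqrt n$ --- which is what ``arbitrarily large'' is taken to mean. So everything reduces to the event $\gA=\{\,\|\vX^T\vc\|_\infty<\lambda_{\min}(\vX_S^T\vX_S)\,\}$.

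This is structurally the same statement as the Gaussian $L^2$ condition (\ref{eq:l2_cond}) handled in the theorem above via \cite{BNN_SDR}, with $\vX$ replaced by the submatrix $\vX_S$ and the Gaussian noise $\ve$ replaced by the fixed vector $\vc$ of squared norm $pm$, so I would follow the same two-sided estimate. For the upper bound: since $\vc$ does not depend on $\vX$, $\vX^T\vc=\sum_{i\in\bar S}c_i\vx_i\sim\mathcal N(\vzero,pm\,\mI_n)$ (the signs are irrelevant by symmetry), so each coordinate is $\mathcal N(0,pm)$ and a union bound with the Gaussian tail $\prob{|Z|\ge u}\le\sqrt{2/\pi}\,u^{-1}e^{-u^2/2}$ gives $\|\vX^T\vc\|_\infty<s$ except with probability at most $n\sqrt{2/\pi}\,(\sqrt{pm}/s)\,e^{-s^2/(2pm)}$. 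For the lower bound: $\vX_S$ has i.i.d.\ standard Gaussian entries and $(1-p)m>n$ (using $p<\tfrac12$ and $m$ large), so the Davidson--Szarek bound gives $\sigma_{\min}(\vX_S)\ge\sqrt{(1-p)m}-\sqrt n-t$, hence $\lambda_{\min}(\vX_S^T\vX_S)\ge(\sqrt{(1-p)m}-\sqrt n-t)^2$, except with probability at most $e^{-t^2/2}$. Since this right-hand side is of order $m$ while an admissible $s$ is of order $\sqrt m$, the inequality $s^2<(\sqrt{(1-p)m}-\sqrt n-t)^2$ holds for $m$ large; a union bound over $\gE_0$ and the two tail events, with the free parameters $s,t$ tuned as fixed fractions of $\sqrt m$, collapses the failure probability to the claimed $5n\sqrt{m/\pi}\,e^{-m/16}$, after which Theorem~\ref{robust_convex_thm} applies.

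The main obstacle is the bookkeeping in this last step: one must split a single exponential budget among $\gE_0$, the $\ell_\infty$ deviation of $\vX^T\vc$, and the smallest-singular-value deviation of $\vX_S$, and choose $s$ and $t$ so that (i) $s^2$ still sits strictly below $(\sqrt{(1-p)m}-\sqrt n-t)^2$ and (ii) every surviving exponent is at least $m/16$ with total prefactor at most $5n\sqrt{m/\pi}$ --- exactly the kind of deliberately lossy constant-chasing carried out in \cite{BNN_SDR} for the quadratic case. A secondary, more conceptual point is making precise that the ``arbitrarily large'' outliers may be chosen after $\vX$ is drawn, so that on $\gE_0$ they genuinely dominate $\delta+2\|\vx_i\|$ and condition (\ref{robust_cond_2}) holds deterministically; once that is set up, no union bound over the (arbitrary) location of the outlier set is needed, since for each fixed location $\vX_S$ is again an i.i.d.\ Gaussian matrix.
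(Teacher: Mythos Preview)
Your proposal is correct and follows essentially the same route as the paper: reduce to Theorem~\ref{robust_convex_thm}, identify $\vX^T\vc$ as an $n$-vector of independent $\mathcal N(0,pm)$ entries and $\sum_i d_i\vx_i\vx_i^T$ as a Wishart matrix on the $(1-p)m$ clean rows, then combine a Gaussian tail bound plus union bound for $\|\vX^T\vc\|_\infty$ with the Davidson--Szarek/Vershynin singular-value bound for $\lambda_{\min}(\vX_S^T\vX_S)$, choosing the free parameter as a fixed fraction of $\sqrt m$. The only difference is that you explicitly track condition~(\ref{robust_cond_2}) via the event $\gE_0$, whereas the paper silently treats ``arbitrarily large'' as meaning large enough that this condition holds by construction and bounds only~(\ref{robust_cond}).
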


\section{Empirical Evaluation \label{sec:numerical_exp}}
In this section, we demonstrate the performance advantages of binary maximin optimization via numerical experiments. In all the simulations the maximin problems were solved using TensorFlow \cite{abadi2016tensorflow}. We experimented with different optimizers including GDA, OGD and extra-gradient, but all gave quite similar results with no clear winner. We report the result using 
an Adam optimizer \cite{adam} combined with exponential decay of $\gamma > 1$. 
 We refer to our method as MAXIMIN.

We experiment with the following regressors:
\begin{itemize}
    \item Linear Regression (LR): The simplest approach which solves the unconstrained regression problem and then rounds the weights to $\pm 1$.
    \item Linear Programming Relaxation (LPR): 
    A more advanced convex relaxation that relaxes the constraints to\\ $w_i \in [-1,1] , \forall i$.
    \item Straight-Through Estimator (STE) \cite{binaryConnect}: Optimize the regression error using the STE estimator. Namely, set the model weights to be $\mbox{sign}(w)$ in the forward pass, and just $w\cdot \boldsymbol1_{w \in [-1,1]}$
    in the backward pass.
    \item Semi-Definite Relaxation (SDR) \cite{goemans1995improved}: The seminal SDR of Geomans and Williamson implemented via an interior point method. 
    \item Binary Optimizer (BOP) \cite{BOP}, a recently introduced optimizer, specified to train deep neural network with binary weights, it does so by sampling the model weights from Bernoulli distribution according to moving average of past gradients, this method has shown to outperform STE on various scenarios.
\end{itemize}

\subsection{Synthetic Binary Regression}

\begin{figure}
    \includegraphics[width=1.0\linewidth]{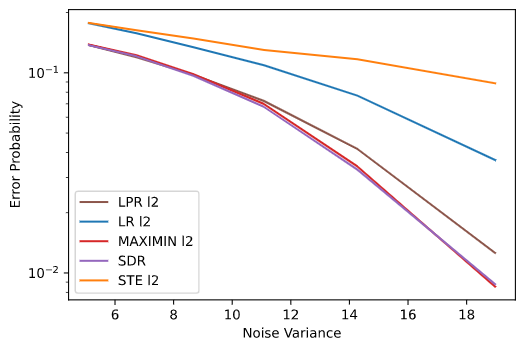}
    \caption{Linear regression with $p=30,n=60$}
    \label{small_scale_blr}
\end{figure}

\begin{figure}
    \includegraphics[width=1.0\linewidth]{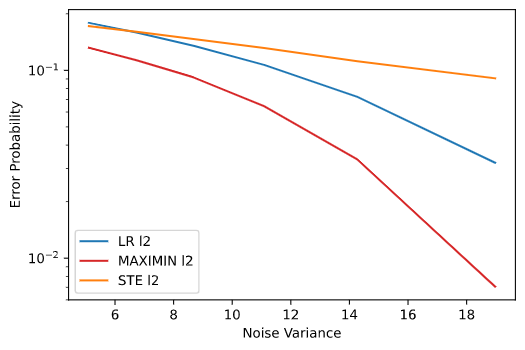}
    \caption{Linear regression with $p=2000,n=4000$}
    \label{large_scale_blr}
\end{figure}

\begin{figure}
    \includegraphics[width=1.0\linewidth]{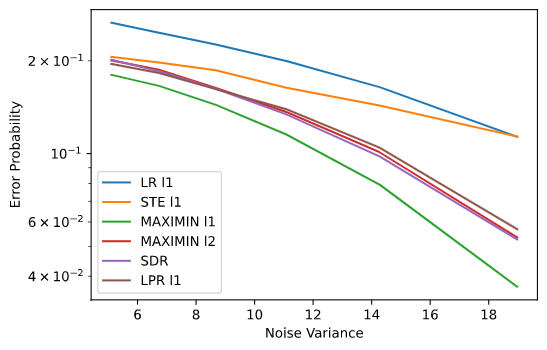}
    \caption{Robust Linear regression with $p=30,n=60$}
    \label{large_scale_rblr}
\end{figure}


We begin with synthetic experiments in linear regression with L$_2$, where data is generated as follows:
\begin{eqnarray}
\vy=\vX\vw^*+\ve \ \ , \ \ \vw^*\in \{\pm 1\}^n , \nonumber\\ 
X_{ij}\sim {\mathcal{N}}(0,1/n) \ \ , \ \ 
\ve\sim {\mathcal{N}}(0,\sigma^2 \mI)) 
\label{eq:gen_gauss}
\end{eqnarray}

We compare MINIMAX to the baselines LR, LPR, STE and SDR. All methods return a vector $\vw\in \{\pm 1 \}^n$ and accuracy is reported as the fraction of correct values compared to $\vw^{\ast}$, namely $\frac{1}{2m}\|\vw^{\ast}-\vw\|_1$.


In Figure \ref{small_scale_blr}, we report results of a small scale experiment. We use log scale in the error axis to emphasize the low noise variance regime. In this setting, SDR is known to be near optimal \cite{QCQP_SDR} and MAXMIN succeeds in matching the SDR performance. STE does not perform well in this case.

Next in Figure \ref{large_scale_blr}, we present results of a larger scale experiment with $p=2000$ and $n=4000$. In this regime, optimizing SDR is not feasible as its complexity depends on $n^2$ variables. MAXMIN continues to dominate the other methods in this large scale setting as well. Interestingly, STE performs well when the noise is large but is inferior to simple LR in low noise conditions.

\subsection{Synthetic Robust Binary Regression}
In \secref{sec:huberloss} we considered the case of regression with outliers, and used the Huber's loss. More precisely, to minimize the number of hyperparameters, we experiment with an L$_1$ loss that corresponds to Huber's loss with $
\delta\rightarrow 0$. We consider data generated as in (\ref{eq:gen_gauss}) but replace the noise with Laplace distribution:
$e_i\sim {\mathcal{L}aplace}(0,\sigma), \forall i$.
We compare the error probabilities of LR, LPR, SDR, STE and MAXIMIN, with both L$_1$ and L$_2$ losses. 
The results are presented in Figure \ref{large_scale_rblr}. When the noise variance is small, L$_1$ MAXMIN provides near perfect recovery as predicted by our theory. With larger noise, MAXMIN continues to provide a steady performance advantage over the baselines.

\subsection{Boston house prices dataset}
Next, we consider more realistic experiments of binary constraint optimization on real world datasets, here we'll consider a regression problem.
We begin with the Boston house prices database benchmark where the goal is to predict a house price using various features such as crime rate, number of rooms, etc. Table 
\ref{boston_data} shows the results of our model compared to all methods discussed above. We used Normalized Root Mean Squared Error (NRMSE) as the evaluation metric:
\begin{equation}
    \text{NRMSE}\left(\vu,\hat{\vu} \right) = \frac{\|\vu-\hat{\vu}\|_2}{\|\vu\|_2},
\end{equation}
The first column reports the performance over the test set where the original dataset was split to 70\% train and 30\% test. As expected SDR and MAXIMIN L$_2$ outperform the baseslines. Interestingly, MAXIMIN L$_2$ is even slightly better\footnote{Our SDP binarization was implemented by rounding the top eigenvector. The performance of SDR can be slightly improved using randomized binarization \cite{QCQP_SDR}.}.
In the second column, we present results using a corrupted dataset. We artificially add 25\% outliers to the train set and repeat the experiemnts. It is easy to see the degradation in performance due to the outliers, as well as the robustness on MAXIMIN L$_1$. 

\subsection{Nonlinear Classification on MNIST} \label{sec:exp_deep}
Finally, we present experiments in quantized classification  using the popular MNIST dataset. 
We define a prediction function $f(\vx;\vw)$ on input $\vx$ parameterized by $\vw$, together with a cross entropy loss denoted by $\ell()$ and consider:
\begin{equation}
\left\{\begin{array}{ll}
    \min_\vw & \sum_j \ell(y_j,f(\vx_j;\vw)) \\
    {\rm{s.t.}} & w_i^2=1,\forall i
\end{array}
\right.
\end{equation}
We explore three types of networks: linear, one hidden layer (128 hidden neurons) and a convolution neural network (CNN) with architecture $32C3 - MP2 - 64C3 - MP2 - 128C3 - 3\times128FC$.\footnote{Architecture description: MPx: Max Pooling with stride x, xCy: Convolution of x channels with kernel size y, xFC: Fully Connected with x outputs.} We used ReLU as the activation function, and added batch normalization between each two layers.


Table \ref{mnist_test_acc} shows the accuracy results of the unconstrained real-valued networks, and binary-valued networks learned using STE, MAXIMIN and BOP. It can be seen that MAXIMIN outperforms both STE and BOP on the linear network, and the two methods are comparable for the other architectures. Given our theory, the results of the linear model are expected, but this experiment illustrates the  potential of MAXIMIN beyond the convex regime.





Figure \ref{mnist_weights_histogram} shows the histogram of the CNN weights during the MAXIMIN optimization. Each histogram corresponds to a different layer, and each slice (z axis) corresponds to an iteration. It can clearly be seen that MAXIMIN weights converge to binary weights quickly and uniformly over all layers. Thus we can conclude that MAXIMIN indeed leads to learning binarized weights with high accuracy in this case.

For completeness, we note that additional experiments (not shown) using deeper architectures led to inferior MAXIMIN performance. The combination of highly non-convex optimizations with binary constraints is very challenging and outside the scope of this paper.

\begin{figure*}[t]
\centering
\includegraphics[width=1.0\textwidth]{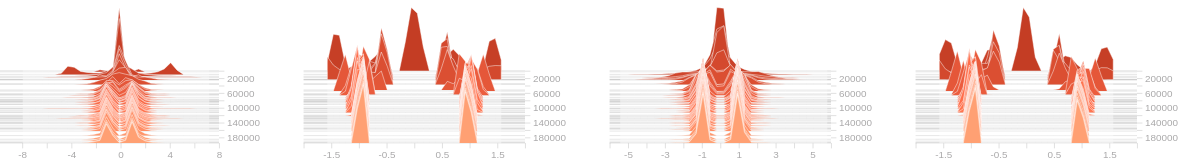}
\includegraphics[width=1.0\textwidth]{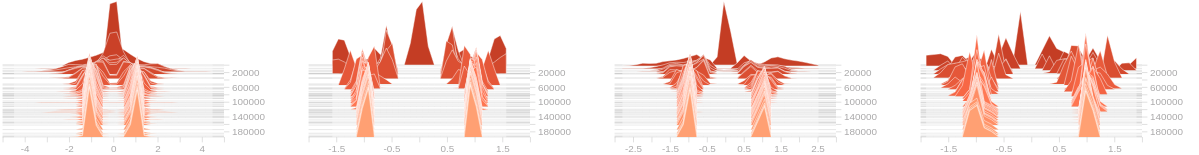}
\caption{A CNN model for MNIST, learned with MAXIMIN. Shown are the histograms of $w$ values for different layers (one figure per layer) . The rows in each figure correspond to different optimization steps.}\label{mnist_weights_histogram}
\end{figure*}



\begin{table}[h]
    \centering
    \begin{tabular}{|l|c|c|}
    \hline
         & Clean Dataset & 25\% outliers \\
         \hline
        LR L1 &      --             & 5.6 $\pm$ 0.9  \\
        LR L2 &      3.7 $\pm$ 0.7  & --  \\
        LPR L1 &     --             & 5.6 $\pm$ 0.9  \\
        LPR L2 &     3.7 $\pm$ 0.7  & --  \\
        STE L1 &     --             & 11.3 $\pm$ 1.6  \\
        STE L2 &     7.1 $\pm$ 1.9  & --  \\
        SDR &        2.5 $\pm$ 0.4  & 3.5 $\pm$ 0.5  \\
        MAXIMIN L1 & 2.6 $\pm$ 0.3  & \textbf{3.2 $\pm$ 0.4}  \\
        MAXIMIN L2 & \textbf{2.4 $\pm$ 0.2}  & 3.5 $\pm$ 0.7  \\
        \hline
    \end{tabular}
    \caption{Boston Housing Test NRMSE}
    \label{boston_data}
\end{table}

\begin{table}[h]
    \centering
    \begin{tabular}{|l|c|c|c|}
    \hline
         &  Linear & One Layer & CNN\\
         \hline
        Real valued & 0.92 & 0.98 & 0.99 \\
        \hline
        STE & 0.87 & 0.96 & 0.99 \\
        BOP & 0.87 & 0.97 & 0.99 \\
        MAXIMIN & 0.9 & 0.97 & 0.99 \\
        \hline
    \end{tabular}
    \caption{Test Accuracy on MNIST}
    \label{mnist_test_acc}
\end{table}

\section{Discussion}
Learning quantized deep-learning models is essentially a discrete optimization problem. However, due to the highly non-linear nature of deep-learning models it is not clear how to extend common discrete optimization methods to this setting. Here we showed that one of the most powerful discrete optimization methods, namely Lagrangian or Semidefinite Relaxations, can in fact be cast in a way that allows scalable optimization and extends to the non-linear case. 

Our theoretical results assume a certain restriction on the noise level in the observations. However, in practice we observe that the method matches the performance of SDR beyond these noise levels. It would be interesting to further strengthen the results accordingly. Furthermore, it would be interesting to extend the theory to other loss functions such as logistic regression and to more complex architectures such as one hidden layer networks. Another important extension is to consider MAXIMIN in regression with more levels of quantization. 

Our theory and experiments show promising performance to MAXIMIN in convex (or near-convex) minimization with binary variables. On the negative side, we did not succeed to use it in deep non-convex architectures. Without the binary constraints, such optimizations are theoretically challenging but are easily solved in practice these days. Our trials with deep MAXIMINs led to unstable behavior. Future work should address such settings and understand whether this is a fundamental barrier, or a technical challenge requiring better parameter tuning and algorithms.

\appendix
Here we will prove the three main results \ref{general_convex_thm}, \ref{blr_convex_thm}, \ref{robust_convex_thm} stated in Section \ref{sec:main}.
\section*{Proof for Theorem \ref{general_convex_thm}}
We begin by recalling the necessary and sufficient conditions for a local maximin point.
Note that these conditions are more specific than those in \cite{JinMinimax2019} and are simplified due to the fact that the lagrangian is linear in the dual variables.

\begin{proposition}[Lagrangian local maximin - sufficient Conditions \cite{JinMinimax2019}] \label{maximin_suff}
Assume that $\mathcal{L}$ is twice differential. Any stationary
point $\left(\vw,\vz \right)$ satisfying
\begin{eqnarray}
\nabla_{\vw}^{2}\mathcal{L}\left(\vw,\vz\right) \succ0, 
&\nabla_{\vw \vz}^{2}\mathcal{L}\left(\vw,\vz\right) \text{is invertible}
\end{eqnarray}
is a local maximin.
\end{proposition}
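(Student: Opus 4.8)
The plan is to verify the two inequalities in the definition of a local maximin directly at the given stationary point, which I denote $(\vw^\ast,\vz^\ast)$. By \emph{stationary} I mean $\nabla_{\vw}\mathcal{L}(\vw^\ast,\vz^\ast)=\vzero$ and $\nabla_{\vz}\mathcal{L}(\vw^\ast,\vz^\ast)=\vzero$. Writing the Lagrangian as $\mathcal{L}(\vw,\vz)=f(\vw)+\sum_i z_i\,(w_i^2-1)$, its gradient in $\vz$ has $i$-th component $w_i^2-1$, so stationarity in $\vz$ already forces $(w_i^\ast)^2=1$ for every $i$; that is, $\vw^\ast\in\{\pm1\}^n$ is primal feasible. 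Two structural facts then carry the argument: $\mathcal{L}$ is affine in $\vz$, and $\vw^\ast$ satisfies the binary constraints.

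\emph{Maximization side.} Since $(w_i^\ast)^2=1$, the map $\vz\mapsto\mathcal{L}(\vw^\ast,\vz)$ is the constant $f(\vw^\ast)$; in particular $\mathcal{L}(\vw^\ast,\vz)=\mathcal{L}(\vw^\ast,\vz^\ast)$ for all $\vz$. As $\vw^\ast$ lies in the ball $\{\vw':|\vw'-\vw^\ast|\le h(\delta)\}$ whenever $h(\delta)\ge 0$, we get for every $\vz$
\[
\min_{|\vw'-\vw^\ast|\le h(\delta)}\mathcal{L}(\vw',\vz)\ \le\ \mathcal{L}(\vw^\ast,\vz)\ =\ \mathcal{L}(\vw^\ast,\vz^\ast),
\]
which is the left-hand inequality. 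This holds for \emph{any} admissible $h$, so we are free to take $h(\delta)=\delta$, which vanishes as $\delta\to0$.

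\emph{Minimization side.} Here $\nabla_{\vw}\mathcal{L}(\vw^\ast,\vz^\ast)=\vzero$ and $\nabla_{\vw}^2\mathcal{L}(\vw^\ast,\vz^\ast)\succ0$ give, by a second-order Taylor expansion of $\vw\mapsto\mathcal{L}(\vw,\vz^\ast)$ about $\vw^\ast$, that $\vw^\ast$ is a strict local minimizer; hence there is $\delta_0>0$ with $\mathcal{L}(\vw^\ast,\vz^\ast)\le\mathcal{L}(\vw,\vz^\ast)$ whenever $|\vw-\vw^\ast|\le\delta_0$, the right-hand inequality. With this $\delta_0$ and $h(\delta)=\delta$, both inequalities hold for all $\delta\in(0,\delta_0)$ and all $(\vw,\vz)$ with $|\vw-\vw^\ast|\le\delta$, $|\vz-\vz^\ast|\le\delta$, so $(\vw^\ast,\vz^\ast)$ is a local maximin.

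I do not anticipate a real obstacle: the only point worth noticing is that the linearity of $\mathcal{L}$ in $\vz$, together with the primal feasibility $(w_i^\ast)^2=1$ that is \emph{forced} by $\vz$-stationarity, makes the maximization side collapse ($\mathcal{L}(\vw^\ast,\cdot)$ is constant), leaving all the content in the local-minimum condition supplied by $\nabla_{\vw}^2\mathcal{L}\succ0$. The second hypothesis, invertibility of $\nabla_{\vw\vz}^2\mathcal{L}$, is not used above; it is the non-degeneracy condition inherited from \cite{JinMinimax2019} and is in fact automatic here, since $\nabla_{\vw\vz}^2\mathcal{L}=2\,\diag{\vw^\ast}$ with $\vw^\ast\in\{\pm1\}^n$. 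Thus the proposition is precisely the specialization of the general second-order sufficient condition for local minimax of \cite{JinMinimax2019} to our Lagrangian, with the degeneracy $\nabla_{\vz}^2\mathcal{L}\equiv 0$ absorbed by the structure.
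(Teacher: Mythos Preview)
Your argument is correct. The paper itself does not prove this proposition: it records it as the specialization to the present Lagrangian of the second-order sufficient conditions in \cite{JinMinimax2019}, remarking only that the conditions simplify because $\mathcal{L}$ is linear in $\vz$. Your write-up goes further by verifying the definition directly, and in doing so it makes explicit \emph{why} the linearity matters: stationarity in $\vz$ forces $(w_i^\ast)^2=1$, which makes $\mathcal{L}(\vw^\ast,\cdot)$ constant and collapses the maximization inequality to the trivial bound $\min_{\vw'}\mathcal{L}(\vw',\vz)\le\mathcal{L}(\vw^\ast,\vz)=\mathcal{L}(\vw^\ast,\vz^\ast)$. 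The minimization inequality is then the standard strict-local-minimum consequence of $\nabla_{\vw}\mathcal{L}=0$ and $\nabla_{\vw}^2\mathcal{L}\succ0$. Your observation that the invertibility hypothesis on $\nabla_{\vw\vz}^2\mathcal{L}$ is not actually used in this direct route---and is in any case automatic here since $\nabla_{\vw\vz}^2\mathcal{L}=2\,\diag{\vw^\ast}$ with $\vw^\ast\in\{\pm1\}^n$---is a nice sharpening relative to the black-box citation. In short: same statement, but you supply an elementary self-contained proof where the paper only points to \cite{JinMinimax2019}.
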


\begin{proposition}[Lagrangian Local maximin - Necessary Conditions \cite{JinMinimax2019}] \label{maximin_ness}
Assuming $\mathcal{L}$ is twice-differentiable, any local maximin $ \left( \vw,\vz \right)$ satisfies
\begin{eqnarray}
\nabla \mathcal{L}\left(\vw,\vz\right)	&=0 \quad
\nabla_{w}^{2}\mathcal{L}\left(\vw,\vz\right)	&\succeq0.
\end{eqnarray}
\end{proposition}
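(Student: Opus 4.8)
The plan is to read the two inequalities in the definition of a local maximin point separately: the right-hand inequality constrains the \emph{inner} variable $\vw$, and the left-hand inequality constrains the \emph{outer} variable $\vz$. First I would extract the $\vw$-conditions. Writing the Lagrangian in the order $\mathcal{L}(\vw,\vz)$, the right-hand inequality $\mathcal{L}(\vw^{\ast},\vz^{\ast})\leq\mathcal{L}(\vw,\vz^{\ast})$, valid for all $\vw$ with $\left|\vw-\vw^{\ast}\right|\leq\delta$, says exactly that $\vw^{\ast}$ is an unconstrained local minimizer of the smooth map $\vw\mapsto\mathcal{L}(\vw,\vz^{\ast})$. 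The classical first- and second-order necessary conditions for a local minimum then give at once $\nabla_{\vw}\mathcal{L}(\vw^{\ast},\vz^{\ast})=0$ and $\nabla_{\vw}^{2}\mathcal{L}(\vw^{\ast},\vz^{\ast})\succeq0$, which already delivers the Hessian requirement and half of the stationarity claim.

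It remains to establish $\nabla_{\vz}\mathcal{L}(\vw^{\ast},\vz^{\ast})=0$, and here I would exploit the fact, emphasized in the paper, that $\mathcal{L}$ is affine in $\vz$: from $\mathcal{L}(\vw,\vz)=f(\vw)+\sum_i z_i(w_i^2-1)$ the partial gradient $\nabla_{\vz}\mathcal{L}(\vw,\vz)$ equals the vector with entries $w_i^2-1$, independent of $\vz$. Define the local value function $\phi(\vz)=\min_{\left|\vw'-\vw^{\ast}\right|\leq h(\delta)}\mathcal{L}(\vw',\vz)$. As a pointwise minimum of functions affine in $\vz$, $\phi$ is concave. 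For $\delta$ small enough the ball $\left|\vw'-\vw^{\ast}\right|\leq h(\delta)$ lies inside the region where $\vw^{\ast}$ is the inner minimizer, so $\phi(\vz^{\ast})=\mathcal{L}(\vw^{\ast},\vz^{\ast})$, and the left-hand inequality becomes $\phi(\vz)\leq\phi(\vz^{\ast})$ for all nearby $\vz$ — i.e. $\vz^{\ast}$ is a local, hence global, maximizer of the concave $\phi$. An envelope (Danskin) argument then identifies $\nabla\phi(\vz^{\ast})$ with $\nabla_{\vz}\mathcal{L}(\vw^{\ast},\vz^{\ast})$, and maximality of a concave function forces this gradient to vanish. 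Combining the two parts yields $\nabla\mathcal{L}(\vw^{\ast},\vz^{\ast})=0$ together with $\nabla_{\vw}^{2}\mathcal{L}(\vw^{\ast},\vz^{\ast})\succeq0$.

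The main obstacle I anticipate is the envelope step, which needs the inner minimizer to be essentially unique. When $\nabla_{\vw}^{2}\mathcal{L}(\vw^{\ast},\vz^{\ast})\succ0$ this is immediate, but the necessary conditions only guarantee $\succeq0$, so in degenerate cases the set $M$ of inner minimizers may contain more than $\vw^{\ast}$; there $\phi$ need not be differentiable and one only obtains $0\in\partial\phi(\vz^{\ast})=\mathrm{conv}\{\,((w_i')^2-1)_i:\vw'\in M\,\}$ from concave maximality, which is weaker than the desired $\nabla_{\vz}\mathcal{L}(\vw^{\ast},\vz^{\ast})=0$. I would handle this either by appealing directly to the corresponding necessary-condition theorem of \cite{JinMinimax2019}, of which this statement is precisely the linear-in-$\vz$ specialization and whose degenerate configurations are the excluded points, or by the one-sided directional-derivative computation showing that $\min_{\vw'\in M}\langle\vu,((w_i')^2-1)_i\rangle\leq0$ must hold for every direction $\vu$, and arguing that under the twice-differentiability hypotheses these non-unique configurations are exactly the degeneracies set aside in the definition.
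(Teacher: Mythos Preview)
The paper does not actually prove this proposition. It is stated in the appendix with a citation to \cite{JinMinimax2019} (the general local-minimax necessary conditions of Jin et al.), prefaced only by the remark that ``these conditions are more specific than those in \cite{JinMinimax2019} and are simplified due to the fact that the Lagrangian is linear in the dual variables.'' The proposition is then used as a black box inside the proof of Theorem~\ref{general_convex_thm}. So there is no original proof to compare against; what you have written is a self-contained justification of a result the paper simply imports.

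On the merits: your two-step argument is the natural one. The right-hand inequality does make $\vw^{\ast}$ a local minimizer of $\mathcal{L}(\cdot,\vz^{\ast})$, yielding $\nabla_{\vw}\mathcal{L}=0$ and $\nabla_{\vw}^{2}\mathcal{L}\succeq 0$ immediately. For the $\vz$-gradient, the linearity of $\mathcal{L}$ in $\vz$ makes your concave-envelope/Danskin route the right idea, and you correctly flag the only real gap: when $\nabla_{\vw}^{2}\mathcal{L}(\vw^{\ast},\vz^{\ast})$ is merely semidefinite the inner minimizer over the $h(\delta)$-ball need not be unique, so $\phi$ may fail to be differentiable at $\vz^{\ast}$ and the envelope identification of $\nabla\phi(\vz^{\ast})$ with $\nabla_{\vz}\mathcal{L}(\vw^{\ast},\vz^{\ast})$ needs care. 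Your proposed resolution---either appealing directly to the corresponding theorem in \cite{JinMinimax2019} or working with one-sided directional derivatives and noting that the problematic configurations are exactly the degenerate points excluded in that framework---is appropriate; the paper itself implicitly takes the first option by citing the result outright.
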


In the case of binary optimizations, these conditions lead to the following results.

\begin{proposition}[Binary sufficient conditions]
Let $\vw^\ast$ be an optimal solution to \eqref{BG}, under the assumption: 
\begin{eqnarray}
&&\|\nabla f \left( \vw^\ast \right) \|_\infty < \lambda_{\min} \left(   \nabla^2 f \left( \vw^\ast \right) \right)
\end{eqnarray}
The pair:
\begin{eqnarray}\label{global_true}
\left( \vw^*,\frac{1}{2} \diag{ \vw^{\ast}} \nabla f \left( \vw^\ast \right) \right)
\end{eqnarray}
is a local maximin point of \eqref{BG_lagrange}.
\end{proposition}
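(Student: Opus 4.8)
The plan is to verify the hypotheses of Proposition~\ref{maximin_suff} at the candidate pair $(\vw^*,\vz^*)$ with $\vz^*=\tfrac12\diag{\vw^*}\nabla f(\vw^*)$: namely that it is a stationary point of $\mathcal{L}$, that $\nabla_\vw^2\mathcal{L}(\vw^*,\vz^*)\succ\vzero$, and that $\nabla_{\vw\vz}^2\mathcal{L}(\vw^*,\vz^*)$ is invertible. Everything follows from differentiating the explicit Lagrangian $\mathcal{L}(\vw,\vz)=f(\vw)+\vw^{T}\diag{\vz}\vw-\vz^{T}\mathbf{1}$ and repeatedly substituting $(w_i^*)^2=1$. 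First I would record the first derivatives, $\nabla_\vz\mathcal{L}(\vw,\vz)$ having $i$-th entry $w_i^2-1$ and $\nabla_\vw\mathcal{L}(\vw,\vz)=\nabla f(\vw)+2\diag{\vz}\vw$, together with the second derivatives $\nabla_\vw^2\mathcal{L}(\vw,\vz)=\nabla^2 f(\vw)+2\diag{\vz}$ and $\nabla_{\vw\vz}^2\mathcal{L}(\vw,\vz)=2\diag{\vw}$.

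For stationarity, $\nabla_\vz\mathcal{L}$ vanishes at $\vw^*$ because $\vw^*\in\{\pm1\}^n$, and $\vz^*$ is exactly the choice making $\nabla_\vw\mathcal{L}(\vw^*,\vz^*)=\nabla f(\vw^*)+2\diag{\vz^*}\vw^*=\vzero$, as one checks entrywise using $(w_i^*)^2=1$. For the curvature condition, $\nabla_\vw^2\mathcal{L}(\vw^*,\vz^*)=\nabla^2 f(\vw^*)+2\diag{\vz^*}$, where the perturbation $2\diag{\vz^*}$ is diagonal with entries $w_i^*\,\partial_i f(\vw^*)$, each of magnitude at most $\|\nabla f(\vw^*)\|_\infty$ since $|w_i^*|=1$; hence $2\diag{\vz^*}\succeq-\|\nabla f(\vw^*)\|_\infty\,\mI$. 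By eigenvalue monotonicity (Weyl's inequality),
\[
\nabla_\vw^2\mathcal{L}(\vw^*,\vz^*)\ \succeq\ \bigl(\lambda_{\min}(\nabla^2 f(\vw^*))-\|\nabla f(\vw^*)\|_\infty\bigr)\mI\ \succ\ \vzero,
\]
the strict inequality being precisely assumption~(\ref{gH_cond}). Finally, $\nabla_{\vw\vz}^2\mathcal{L}(\vw^*,\vz^*)=2\diag{\vw^*}$ is invertible because every $w_i^*=\pm1\neq0$. Proposition~\ref{maximin_suff} then yields that $(\vw^*,\vz^*)$ is a local maximin point of $\mathcal{L}$.

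I do not expect a genuine obstacle: the argument is a direct calculation, and the only points that merit care are keeping the sign conventions of $\mathcal{L}$ and of $\vz^*$ aligned so that the $\vw$-gradient truly vanishes, and ensuring $f$ is twice differentiable at $\vw^*$ so that Proposition~\ref{maximin_suff} applies. It is worth noting that this proposition uses only $\vw^*\in\{\pm1\}^n$ and~(\ref{gH_cond}); optimality of $\vw^*$ for~(\ref{BG}) and the sub-quadratic property of $f$ are not needed at this stage, entering only in the remaining parts of the proof of Theorem~\ref{general_convex_thm}, where one upgrades this local maximin to the \emph{unique} one and establishes the zero duality gap $p^*=d^*$.
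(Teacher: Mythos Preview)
Your approach is essentially identical to the paper's: verify the hypotheses of Proposition~\ref{maximin_suff} by checking stationarity, invertibility of $\nabla_{\vw\vz}^2\mathcal{L}=2\diag{\vw^*}$, and positive definiteness of $\nabla_\vw^2\mathcal{L}$ via the diagonal bound $|2z_i^*|=|w_i^*\,\partial_i f(\vw^*)|\le\|\nabla f(\vw^*)\|_\infty$ together with~(\ref{gH_cond}). One caveat you already flag: with $\mathcal{L}$ as written and $\vz^*=\tfrac12\diag{\vw^*}\nabla f(\vw^*)$, the entrywise check actually gives $\nabla f(\vw^*)+2\diag{\vz^*}\vw^*=2\nabla f(\vw^*)$, not $\vzero$, so a sign must be flipped on $\vz^*$ (the paper's own proof carries the same sign inconsistency between~(\ref{BG_lagrange}) and its derivative formulas).
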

\begin{proof}
First, $\vw^{\ast}$ is optimal and feasible, i.e.
\begin{eqnarray}
w^{\ast 2}_i &=1, \forall i, \quad
\end{eqnarray}
set:
\begin{flalign}
\vz^{\ast} &= \frac{1}{2} \diag{ \vw^{\ast}} \nabla f \left( \vw^\ast \right)
\end{flalign}
which means
\begin{flalign}
\nabla^2_{wz} \mathcal{L} \left( \vw^{\ast},\vz^{\ast} \right) = -2\diag{\vw^{\ast}}
\end{flalign}
is invertable, and also:
\begin{flalign}
\nabla^2_w \mathcal{L} \left( \vw^{\ast},\vz^{\ast} \right) & =
\nabla^2 f \left( \vw^\ast \right) -
2\diag{\vz^{\ast}} \nonumber \\
&= \nabla^2 f \left( \vw^\ast \right) -
\diag{ \vw^{\ast}} \diag{\nabla f \left( \vw^\ast \right)} \nonumber \\
& \succeq
\nabla^2 f \left( \vw^\ast \right) -
 \|\nabla f \left( \vw^\ast \right)\|_{\infty}\mI \nonumber \\
& \succ
\nabla^2 f \left( \vw^\ast \right) -
\lambda_{min} \left( \nabla^2 f \left( \vw^\ast \right) \right) \mI \nonumber \\
& \succeq \vzero
\end{flalign}
Due to Proposition \ref{maximin_suff}, $\left( \vw^{\ast}, \vz^{\ast}\right)$ is a maximin point.
\end{proof}


\begin{proposition}[Binary necessary conditions]
If $\left(\vw,\vz\right)$ is a local maximin point of (\ref{BG_lagrange}) and  $f$ is a sub-quadratic function then $\vw$ is an optimal solution to (\ref{BG}).
\end{proposition}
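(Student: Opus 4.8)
The plan is to apply the necessary conditions for a local maximin (Proposition \ref{maximin_ness}) and then argue by contradiction using sub-quadraticity. First I would unpack the stationarity condition $\nabla\mathcal{L}(\vw,\vz)=\vzero$. The $\vz$-block gives $w_i^2=1$ for all $i$, so $\vw$ is feasible for \eqref{BG}; the $\vw$-block gives $\nabla f(\vw)+2\diag{\vz}\vw=\vzero$, which, using $w_i^2=1$, forces $z_i=-\tfrac12 w_i\,(\nabla f(\vw))_i$, equivalently $2\diag{\vz}=-\diag{\vw}\diag{\nabla f(\vw)}$. The second-order condition $\nabla_{\vw}^2\mathcal{L}(\vw,\vz)=\nabla^2 f(\vw)+2\diag{\vz}\succeq\vzero$ then becomes the matrix inequality $\nabla^2 f(\vw)\succeq\diag{\vw}\diag{\nabla f(\vw)}$.

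Next I would assume, for contradiction, that $\vw$ is not optimal, i.e.\ there is a feasible $\vw'$ with $w_i'^2=1$ and $f(\vw')<f(\vw)$. Applying the sub-quadratic property with $\vw_1=\vw'$ and $\vw_2=\vw$ yields
\begin{equation}
(\vw'-\vw)^{T}\nabla f(\vw)+\tfrac12(\vw'-\vw)^{T}\nabla^2 f(\vw)(\vw'-\vw)<0 .
\end{equation}
The binary structure supplies two identities: since $w_i^2=w_i'^2=1$ we have $(w_i'-w_i)^2=2(1-w_iw_i')$ and $w_i'-w_i=w_i(w_iw_i'-1)$. Using the first identity together with the Hessian lower bound, the quadratic term satisfies $\tfrac12(\vw'-\vw)^{T}\nabla^2 f(\vw)(\vw'-\vw)\ge\sum_i w_i(\nabla f(\vw))_i(1-w_iw_i')$, while the second identity gives $(\vw'-\vw)^{T}\nabla f(\vw)=\sum_i(\nabla f(\vw))_i(w_i'-w_i)=-\sum_i w_i(\nabla f(\vw))_i(1-w_iw_i')$. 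Adding these, the left-hand side of the sub-quadratic inequality is $\ge 0$, contradicting its strict negativity. Hence no improving feasible $\vw'$ exists and $\vw$ solves \eqref{BG}.

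The only non-mechanical point — and thus the main obstacle — is recognizing that on the binary set $\{\pm1\}^n$ the first-order and second-order terms of the sub-quadratic inequality become exact negatives of one another, once the Hessian positive-semidefiniteness condition is used to replace $\nabla^2 f(\vw)$ by its diagonal lower bound $\diag{\vw}\diag{\nabla f(\vw)}$ coming from stationarity. Everything else (computing $\nabla\mathcal{L}$ and $\nabla_{\vw}^2\mathcal{L}$, eliminating $\vz$, and the two $\pm1$-vector identities) is routine; I would just be careful that the sign in $2\diag{\vz}=-\diag{\vw}\diag{\nabla f(\vw)}$ is such that $\nabla^2 f(\vw)+2\diag{\vz}\succeq\vzero$ gives a valid lower bound for the quadratic form evaluated at $\vw'-\vw$.
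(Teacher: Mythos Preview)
Your argument is correct and follows essentially the same route as the paper: apply the necessary conditions (Proposition \ref{maximin_ness}) to obtain feasibility and the relation between $\vz$ and $\nabla f(\vw)$, then use the binary identity $2(\vw-\vw')^{T}\diag{\vz}\vw=(\vw-\vw')^{T}\diag{\vz}(\vw-\vw')$ together with $\nabla_{\vw}^{2}\mathcal{L}\succeq 0$ to contradict sub-quadraticity. The only cosmetic difference is that you eliminate $\vz$ explicitly and work with coordinate identities, whereas the paper keeps $\vz$ in the chain of inequalities; the two are algebraically equivalent.
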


\begin{proof}
Let $\left(\vw,\vz \right)$ be a local maximin point of (\ref{BG_lagrange}) then the following holds:
\begin{eqnarray}
\nabla_{\vz}\mathcal{L}\left(\vw,\vz\right)=0 \label{gen_grad_dual}
\Leftrightarrow	&
\vw_i^2=1,\forall i \\
\nabla_{\vw}\mathcal{L}\left(\vw,\vz\right)=0 \label{gen_grad_primal}
\Leftrightarrow	&
\nabla f \left( \vw\right) -2\diag{\vz}\vw=0 \\
\nabla_{\vw}^{2}\mathcal{L}\left(\vw,\vz\right) \succeq0 \label{gen_hess_primal}
\Leftrightarrow	&
\nabla^2 f \left( \vw\right) -2\diag{\vz}\succeq0
\end{eqnarray}
Assume towards contradiction that $\vw$ in not the global optimum of (\ref{BG}), i.e. $f \left( \vw\right) > f \left( \vw^{\ast}\right)$,
using the above conditions we obtain:
\begin{flalign}
0 &=\left(\vw-\vw^{\ast}\right)^{T}\nabla f\left(\vw\right)-2\left(\vw-\vw^{\ast}\right)^{T}\diag{\vz}\vw \nonumber\\
  &=\left(\vw-\vw^{\ast}\right)^{T}\nabla f\left(\vw\right)-\left(\vw-\vw^{\ast}\right)^{T}\diag{\vz}\left(\vw-\vw^{\ast}\right) \nonumber\\
  &\geq\left(\vw-\vw^{\ast}\right)^{T}\nabla f\left(\vw\right)-\frac{1}{2}\left(\vw-\vw^{\ast}\right)^{T}\nabla^{2}f\left(\vw\right)\left(\vw-\vw^{\ast}\right) \nonumber
\end{flalign}
where the first equality is due to the fact that $\vw$ is stationary point of $f$,
the second equality is because $\vw, \vw^{\ast} \in \{ \pm 1 \}^n$, and the third inequality is due to (\ref{gen_hess_primal}).
Finally, recall that $f$ is sub-quadratic and thus:
\begin{flalign}
  0&< \left(\vw-\vw^{\ast}\right)^{T}\nabla f\left(\vw\right)-\frac{1}{2}\left(\vw-\vw^{\ast}\right)^{T}\nabla^{2}f\left(\vw\right)\left(\vw-\vw^{\ast}\right) \nonumber
\end{flalign}
which is a contradiction.
\end{proof}



\section*{Proof for Theorem \ref{blr_convex_thm}}
To establish this theorem we just need to show that the conditions of Proposition \ref{general_convex_thm} holds with the linear regression objective:
\begin{eqnarray}
 f \left(\vw\right) = \|\vX\vw-\vy\|_2^2,\quad \vy=\vX\vw^*+\ve
\end{eqnarray}
Thus:
\begin{eqnarray}
 &&f \left(\vw \right) = \|\vX \left( \vw-\vw^{\ast} \right) - \ve\|_2^2 \nonumber \\
&&\nabla f \left(\vw \right) =2\mX^T(\mX(\vw-\vw^*)+\ve)\nonumber\\
&&\nabla^2 f \left( \vw \right) =2\mX^T\mX
\end{eqnarray}
The condition follows immediately:
\begin{eqnarray}
&&\|\nabla f \left( \vw^\ast \right) \|_\infty < \lambda_{\min} \left(   \nabla^2 f \left( \vw^\ast \right) \right) \nonumber \\ 
&& \Leftrightarrow \nonumber \\ 
&&\|\vX^T\ve\|_\infty < \lambda_{\min} \left( \vX^T\vX \right)
\end{eqnarray}
Finally, a quadratic function is obviously sub-quadratic, if $ f\left(\vw_1\right) < f\left(\vw_2\right)$ then:
\begin{multline}
0> f\left(\vw_1\right) - f\left(\vw_2\right) = \left(\vw_1-\vw_2\right)^{T} \nabla f\left(\vw_2\right) \\
 +\frac{1}{2}\left(\vw_1-\vw_2\right)^{T}\nabla^{2}f\left(\vw_2\right)\left(\vw_1-\vw_2\right) \nonumber
\end{multline}
where the equality is due to the fact that second order Taylor expansion is exact for quadratic objective.

\section*{Proof for Theorem \ref{robust_convex_thm}}
We need to show that the conditions of Proposition \ref{general_convex_thm} hold with the robust regression objective:
\begin{eqnarray}
f \left( \vw\right) = \sum_i\ell \left( \vx_i^T\vw-y_i \right), \quad \vy=\vX\vw^*+\ve
\end{eqnarray}
where:
\begin{equation}
\ell \left( z \right) = 
     \begin{cases}
       \frac{z^2}{2} &\quad |z|\leq \delta\\ 
       |z|\delta - \frac{\delta^2}{2} &\quad \text{else} 
     \end{cases}  \nonumber
\end{equation}
Thus:

\begin{multline}
 f \left(\vw \right) = \sum_{i\in I} \frac{1}{2}\left( \vx_i^T \left( \vw - \vw^{\ast}\right) - e_i \right)^2  \\
 + \sum_{i\notin I} \left| \vx_i^T \left( \vw - \vw^{\ast}\right) - e_i \right|+c
\end{multline}
\begin{multline}
\nabla f \left( \vw \right) = \sum_{i\in I} \left( \vx_i^T \left( \vw - \vw^{\ast}\right) - e_i \right)\vx_i \\
+ \sum_{i\notin I} \sign\left( \vx_i^T \left( \vw - \vw^{\ast}\right) - e_i \right)\vx_i
\end{multline}
\begin{flalign}
\quad \nabla^2 f \left( \vw \right) =\sum_{ i\in I} \vx_i \vx_i^T&&
\end{flalign}
where $I$ are the set of indices for which $|\vx_i^T \left( \vw - \vw^{\ast}\right) - e_i| \leq \delta$. 
Again, the general condition holds immediately:
\begin{eqnarray}
&&\|\nabla f \left( \vw^\ast \right) \|_\infty < \lambda_{\min} \left(   \nabla^2 f \left( \vw^\ast \right) \right) \nonumber \\ 
&& \Leftrightarrow \nonumber \\ 
&&\|\vX^T\vc\|_\infty < \lambda_{\min} \left( \sum_i d_i \vx_i\vx_i^T \right)
\end{eqnarray}


What remains is to prove that $f$ is sub-quadaratic. For this purpose, we define a strongly sub-quadratic function
\begin{definition}
[Strongly Sub-quadratic function]
A function $f$ is called strongly sub-quadratic if $\forall \vw_1, \vw_2$ s.t. $f\left(\vw_1\right) < f\left(\vw_2\right)$
\begin{multline}
f\left(\vw_1\right) - f\left(\vw_2\right) \geq \left(\vw_1-\vw_2\right)^{T} \nabla f\left(\vw_2\right) \\
 +\frac{1}{2}\left(\vw_1-\vw_2\right)^{T}\nabla^{2}f\left(\vw_2\right)\left(\vw_1-\vw_2\right)
\end{multline}
\end{definition}
This condition is stronger than the sub-quadratic condition, i.e. if $f$ is strongly sub-quadratic function then $f$ is also a sub-quadratic function.
 

\begin{lemma}[Huber's loss is strongly\label{lemma_huber} sub-quadratic]
Huber's loss:
\begin{equation}
\ell \left( z \right) = 
     \begin{cases}
       \frac{z^2}{2} &\quad |z|\leq \delta\\ 
       |z|\delta - \frac{\delta^2}{2} &\quad \text{else} 
     \end{cases}  \nonumber
\end{equation}
is strongly sub-quadratic
\end{lemma}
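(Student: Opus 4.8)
The plan is to verify the inequality at the level of the scalar residual. Since $\ell:\mathbb{R}\to\mathbb{R}$, ``strongly sub-quadratic'' here means that for all $z_1,z_2$ with $\ell(z_1)<\ell(z_2)$,
\[
\ell(z_1)-\ell(z_2)\ \ge\ \ell'(z_2)(z_1-z_2)+\tfrac12\,\ell''(z_2)(z_1-z_2)^2,
\]
with the convention $\ell''(z)=1$ for $|z|\le\delta$ and $\ell''(z)=0$ otherwise (matching the definition of the $d_i$ in Theorem~\ref{robust_convex_thm}). Equivalently, writing $T(z_1):=\ell(z_2)+\ell'(z_2)(z_1-z_2)+\tfrac12\ell''(z_2)(z_1-z_2)^2$ for the second-order Taylor polynomial of $\ell$ at $z_2$, I must show $\ell(z_1)\ge T(z_1)$; I would split on whether $z_2$ lies in the quadratic region.

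If $|z_2|>\delta$ then $\ell'(z_2)=\delta\,\sign(z_2)$ and $\ell''(z_2)=0$, and a one-line simplification gives $T(z_1)=\delta\,\sign(z_2)\,z_1-\tfrac{\delta^2}{2}\le\delta|z_1|-\tfrac{\delta^2}{2}$; it then suffices that $\ell(z_1)\ge\delta|z_1|-\tfrac{\delta^2}{2}$ for every $z_1$, which is an equality for $|z_1|>\delta$ and reads $\tfrac12(|z_1|-\delta)^2\ge0$ for $|z_1|\le\delta$ — and this case uses no hypothesis at all. If $|z_2|\le\delta$ then $\ell'(z_2)=z_2$, $\ell''(z_2)=1$, and the cross terms telescope to $T(z_1)=\tfrac12 z_1^2$; this is the one place where the hypothesis $\ell(z_1)<\ell(z_2)$ is essential, since $\ell(z_1)\ge\tfrac12 z_1^2$ fails for large $|z_1|$. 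The key observation is that $\ell$ is even and strictly increasing on $[0,\infty)$ with $\ell(\delta)=\tfrac{\delta^2}{2}$, so $\ell(z_1)<\ell(z_2)=\tfrac12 z_2^2\le\tfrac{\delta^2}{2}=\ell(\delta)$ forces $|z_1|<\delta$; hence $\ell(z_1)=\tfrac12 z_1^2=T(z_1)$ and the inequality holds, in fact with equality. Combining the two cases establishes the lemma.

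The only genuine subtlety — and the step I would be most careful with — is the kink of $\ell$ at $\pm\delta$, where $\ell$ is not twice differentiable: I would fix $\ell''(\pm\delta)=1$ as above, check that the $|z_2|\le\delta$ argument is insensitive to this choice (it uses only $|z_2|\le\delta\Rightarrow\ell(z_2)\le\tfrac{\delta^2}{2}$), and note that the $|z_2|>\delta$ bound extends continuously to the boundary, so the two regimes are consistent. Finally, for use in the proof of Theorem~\ref{robust_convex_thm} I would record the two easy consequences already signalled in the text: strong sub-quadraticity implies ordinary sub-quadraticity because $\ell(z_1)-\ell(z_2)<0$ under the hypothesis; and the property is preserved under precomposition with an affine map $a(\vw)=\vx_i^{T}\vw-y_i$, since $\nabla(\ell\circ a)(\vw_2)^{T}(\vw_1-\vw_2)+\tfrac12(\vw_1-\vw_2)^{T}\nabla^2(\ell\circ a)(\vw_2)(\vw_1-\vw_2)$ equals exactly $\ell'(z_2)(z_1-z_2)+\tfrac12\ell''(z_2)(z_1-z_2)^2$ with $z_j=a(\vw_j)$, so the vector statement reduces to the scalar lemma just proved.
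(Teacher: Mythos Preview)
Your proof is correct and follows essentially the same approach as the paper: both split on whether $z_2$ lies in the quadratic or linear regime, use the hypothesis $\ell(z_1)<\ell(z_2)$ in the quadratic case to force $|z_1|\le\delta$ (so the second-order Taylor expansion is exact), and handle the linear case via the first-order lower bound (the paper simply cites convexity where you compute $T(z_1)=\delta\,\sign(z_2)\,z_1-\tfrac{\delta^2}{2}$ explicitly). Your additional remarks on the kink convention and on affine precomposition are accurate and go slightly beyond what the paper spells out in the lemma itself.
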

\begin{proof}
First, $\ell \left(z_1\right) \leq \ell \left(z_2\right)$, so if $|z_2| \leq \delta$ then also $|z_1| \leq \delta$, and since both points are on the quadratic regime of the loss, we trivially get:
\begin{eqnarray}
\ell \left(z_1\right) -  \ell \left(z_2\right) \geq 
\ell ' \left(z_2\right) \left(z_1-z_2\right) 
 +\frac{1}{2}\ell '' \left(z_2\right)\left(z_1-z_2\right)^2
\end{eqnarray}
If $|z_2| > \delta$ then it is on the linear regime of the loss, thus $ \ell '' \left(z_2\right) = 0$ and from convexity of huber loss:
\begin{eqnarray}
\ell \left(z_1\right) -  \ell \left(z_2\right) &\geq 
\ell ' \left(z_2\right) \left(z_1-z_2\right).
\end{eqnarray}
\end{proof}

\begin{lemma}[Robust linear regression is sub-quadratic]
Let $\ve$ be the noise vector such that for every $i$: $|e_i| \leq \delta $ or $|e_i| \geq \delta+2\| x_i\|$
then linear regression with Huber's loss is sub-quadratic.
\end{lemma}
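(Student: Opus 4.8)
The plan is to reduce the claim to a \emph{per-example} statement and then invoke Lemma~\ref{lemma_huber}, using the margin hypothesis~(\ref{robust_cond_2}) only to eliminate the one dangerous configuration. Since the sub-quadratic property is used in the proof of the binary necessary conditions with $\vw_2$ a local maximin (hence binary) and $\vw_1=\vw^{\ast}$ the optimal binary vector, it suffices to establish the inequality in that instance. Write $r_i(\vw)=\vx_i^{T}\vw-y_i=\vx_i^{T}(\vw-\vw^{\ast})-e_i$, so $r_i(\vw^{\ast})=-e_i$ and $r_i(\vw^{\ast})-r_i(\vw)=\vx_i^{T}(\vw^{\ast}-\vw)$. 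Using $\nabla f(\vw)=\sum_i \ell'(r_i(\vw))\vx_i$ and $\nabla^{2} f(\vw)=\sum_i \ell''(r_i(\vw))\vx_i\vx_i^{T}$, the left-hand side of the sub-quadratic inequality,
\[
\big(\vw^{\ast}-\vw\big)^{T}\nabla f(\vw)+\tfrac12\big(\vw^{\ast}-\vw\big)^{T}\nabla^{2}f(\vw)\big(\vw^{\ast}-\vw\big),
\]
decomposes as $\sum_i \mathrm{term}_i$, where $\mathrm{term}_i=\ell'(r_i(\vw))\,s_i+\tfrac12\ell''(r_i(\vw))\,s_i^{2}$ with $s_i=r_i(\vw^{\ast})-r_i(\vw)$; that is, $\mathrm{term}_i$ is the second-order Taylor surplus of $\ell$ based at $r_i(\vw)$ and evaluated at $r_i(\vw^{\ast})$.

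Next I would bound each $\mathrm{term}_i$ by $\ell(r_i(\vw^{\ast}))-\ell(r_i(\vw))$ according to which piece of $\ell$ contains $r_i(\vw)$. If $|r_i(\vw)|\ge\delta$ then $\ell''(r_i(\vw))=0$ and convexity of $\ell$ gives $\mathrm{term}_i=\ell'(r_i(\vw))s_i\le \ell(r_i(\vw^{\ast}))-\ell(r_i(\vw))$. If $|r_i(\vw)|<\delta$, I claim $r_i(\vw^{\ast})=-e_i$ lies in the quadratic piece as well: otherwise $|e_i|>\delta$, so~(\ref{robust_cond_2}) forces $|e_i|\ge\delta+2\|x_i\|$, while binariness of $\vw,\vw^{\ast}$ gives $|\vx_i^{T}(\vw-\vw^{\ast})|\le 2\|x_i\|$ (each coordinate of $\vw-\vw^{\ast}$ is bounded by $2$ in absolute value), whence $|r_i(\vw)|\ge|e_i|-|\vx_i^{T}(\vw-\vw^{\ast})|\ge\delta$, a contradiction. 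Hence $|e_i|\le\delta$, both $r_i(\vw)$ and $r_i(\vw^{\ast})$ are in the quadratic piece, the expansion is exact, and $\mathrm{term}_i=\ell(r_i(\vw^{\ast}))-\ell(r_i(\vw))$. Summing, $\sum_i\mathrm{term}_i\le f(\vw^{\ast})-f(\vw)<0$ whenever $f(\vw^{\ast})<f(\vw)$, which is the desired inequality; combined with the already-computed $\nabla f$, $\nabla^{2}f$ and the equivalence of~(\ref{gH_cond}) with~(\ref{huber_condition}), this discharges all hypotheses of Theorem~\ref{general_convex_thm} and yields Theorem~\ref{robust_convex_thm}.

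The main obstacle is exactly the regime where $r_i(\vw)$ is in the quadratic piece but $r_i(\vw^{\ast})$ is in the linear piece: there the Taylor surplus equals $\tfrac12(|r_i(\vw^{\ast})|-\delta)^{2}>0$ and points the wrong way, so $\mathrm{term}_i$ can exceed $\ell(r_i(\vw^{\ast}))-\ell(r_i(\vw))$ and break the termwise bound. Condition~(\ref{robust_cond_2}) is designed precisely so that this never occurs for binary $\vw$: an outlier index ($|e_i|>\delta$) is pushed out of the quadratic piece at every binary point by a safety margin of $2\|x_i\|$, which dominates the worst possible change $|\vx_i^{T}(\vw-\vw^{\ast})|$ in its residual. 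Everything else is the routine verification already displayed for the gradient and Hessian of the Huber objective.
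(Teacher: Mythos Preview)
Your argument is correct and follows essentially the same route as the paper's own proof: decompose the second-order Taylor surplus per sample, bound each summand by $\ell(r_i(\vw^{\ast}))-\ell(r_i(\vw))$, and use the margin condition~(\ref{robust_cond_2}) to rule out the one bad configuration (quadratic regime at $\vw$, linear regime at $\vw^{\ast}$). The paper organizes this into four overlapping cases and introduces the set $B$ of bad indices explicitly, then asserts $B=\emptyset$; your two-case split (by the regime of $r_i(\vw)$ alone) is cleaner, and you actually \emph{prove} $B=\emptyset$ via the binariness bound $|\vx_i^{T}(\vw-\vw^{\ast})|\le 2\|x_i\|$, which the paper leaves implicit. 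Your upfront restriction to binary $\vw,\vw^{\ast}$ is also exactly what is needed, since the necessary-conditions proposition only ever invokes sub-quadraticity at feasible (hence binary) points.
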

\begin{proof}
Define the set of all indices in which $\vw^{\ast}$ is on the linear regime of the huber loss and $\vw$ is on the quadratic regime regime:
\begin{eqnarray}
B = \{ i : \ell \left( \vx_i^T \vw - y_i \right) \leq \frac{1}{2}\delta^2 \leq \ell \left( \vx_i^T \vw^{\ast} - y_i \right)\}
\end{eqnarray}
then:
\begin{flalign}
\left(\vw^{\ast}- \vw\right)^{T}& \nabla f\left(\vw\right)
 +\frac{1}{2}\left(\vw^{\ast}-\vw\right)^{T}\nabla^{2}f\left(\vw\right)\left(\vw^{\ast}-\vw\right) \nonumber \\
=& \sum_i \ell ' \left( \vx_i^T \vw- y_i \right)\vx_i^T \left(\vw^{\ast}-\vw\right) \nonumber \\
\label{robust_subq2}& +\frac{1}{2} \sum_i \ell '' \left( \vx_i^T \vw- y_i \right) \left( \vx_i^T\left(\vw^{\ast}-\vw\right) \right)^2  \\
\leq & \sum_i \ell \left( \vx_i^T \vw^{\ast} - y_i \right) -  \ell \left( \vx_i^T \vw- y_i \right) \nonumber \\
\label{robust_subq3}& +\frac{1}{2} \sum_{i\in B} \ell '' \left( \vx_i^T \vw- y_i \right) \left( \vx_i^T\left(\vw^{\ast}-\vw\right) \right)^2  \\
\label{robust_subq4} < & 0
\end{flalign}

To show the inequality in (\ref{robust_subq3}), we consider four cases:
\begin{itemize}
    \item $\vw^{\ast}, \vw$ are both on the linear regime $\ell \left( \vx_i^T \vw - y_i \right),  \ell \left( \vx_i^T \vw^{\ast} - y_i \right) \geq \frac{\delta^2}{2}$: the linear regime is strongly sub-quadratic.
    \item $\vw^{\ast}, \vw$ are both on the quadratic regime $\ell \left( \vx_i^T \vw - y_i \right),  \ell \left( \vx_i^T \vw^{\ast} - y_i \right) \leq \frac{\delta^2}{2}$: the quadratic regime is strongly sub-quadratic.
    \item  $\vw^{\ast}$ has a lower objective $\ell \left( \vx_i^T \vw - y_i \right) \geq \ell \left( \vx_i^T \vw^{\ast} - y_i \right)$: due to Lemma \ref{lemma_huber} $\ell$ is strongly sub-quadratic.
    \item $i \in B$: Due to convexity of Huber's loss:
    \begin{multline}
    \ell \left( \vx_i^T \vw^{\ast} - y_i \right) -  \ell \left( \vx_i^T \vw- y_i \right) \\
    \geq  \ell ' \left( \vx_i^T \vw- y_i \right)\vx_i^T \left(\vw^{\ast}-\vw\right)
    \end{multline}
\end{itemize}

Finally, the inequality in (\ref{robust_subq4}) is due to the fact that $B = \emptyset$ under the lemma noise constraints, and by definition $f\left( \vw^{\ast} \right) < f\left( \vw \right)$.
\end{proof}

\section*{Proof for Theorem \ref{huber_diversity}}


First, recall the settings, the objective is:
\begin{equation}
f \left( \vw\right) = \sum_i\ell \left( \vx_i^T\vw-y_i \right)\\
\end{equation}
where $\ell$ is Huber's loss, assume the data is generated as follows:
\begin{itemize}
    \item $\vw^*\in\{\pm 1\}^n$
    \item $\vy=\vX\vw^*+\ve$
    \item $\vX\in {\mathbb{R}}^{m\times n}$ such that $X_{i,j}\sim {\mathcal{N}}(0,1)$ for every $i \in [n], j \in [m]$ independently.
    \item $\ve^*\in{\mathbb{R}}^{m}$ 
    such that on $pm$ indices\footnote{A finer analysis can also handle a random number of outliers but is omitted for simplicity.} $e_i$ is arbitrary large and $e_i=0$ on the rest, for a constant $0\leq p<\frac{1}{2}$.
\end{itemize}

Next, we bound the probability of the event that:
\begin{eqnarray}
&& \max_m |{\mathcal{N}}(0,k)| \leq \lambda_{\min}(\sum^{m-k}_i\vx_i\vx_i^T)
\end{eqnarray}
We define \begin{eqnarray}
\|v\|_\infty\leq \lambda_{\min}(\mA) \nonumber
\end{eqnarray}
where $v$ is a vector with $n$ independent ${\mathcal{N}}(0,k)$ variables, and $\mA$ is an $n\times n$ Wishart matrix with $m-k=r$ degrees of freedom.

Using Corollary 5.35 in \cite{Vershynin2010Wishart} and setting $t=\frac{1}{2}\sqrt{r}$, we get the bound:
\begin{eqnarray}
P \left(\lambda_{\min}(\mA)\leq \left( \frac{3}{2} \sqrt{r} + \sqrt{n}\right)^2 \right)
\leq 2e^{-\frac{r}{8}} \label{div_1}
\end{eqnarray}
Using Gaussian bound from Section 7.1 in \cite{feller-vol-1} and setting $x=\frac{1}{\sqrt{k}}\left( \frac{3}{2} \sqrt{r} + \sqrt{n}\right)^2$, we get the bound:
\begin{flalign}
P&\left(|v_i|\geq \left( \frac{3}{2} \sqrt{r} + \sqrt{n}\right)^2\right) &\nonumber\\
&\qquad \qquad \leq \sqrt{\frac{2k}{\pi}}\frac{1}{\left( \frac{3}{2} \sqrt{r} + \sqrt{n}\right)^2} 2ne^{- \frac{1}{2k}\left( \frac{3}{2} \sqrt{r} + \sqrt{n}\right)^4} \nonumber \\
&\qquad \qquad \leq  \sqrt{\frac{2k}{\pi}}e^{- \frac{r^2}{2k}} \nonumber \\
&\qquad \qquad \leq  \sqrt{\frac{m}{\pi}}e^{- \frac{m}{4}}
\end{flalign}
where in the last two inequalities we used the theorem assumptions $r \geq \frac{1}{2}m \geq k \geq 0$. 
Their union bound  yields:
\begin{flalign}
P & \left(\exists i,  |v_i|\geq \left( \frac{3}{2} \sqrt{r} + \sqrt{n}\right)^2\right) & \nonumber \\
& \qquad \qquad \qquad \leq \sum_i P\left(|v_i|\geq \left( \frac{3}{2} \sqrt{r} + \sqrt{n}\right)^2\right) \nonumber \\
&\qquad \qquad \qquad \leq n \cdot \sqrt{\frac{m}{\pi}}e^{- \frac{m}{4}}
 \label{div_2}
\end{flalign}
Finally, combining the two bounds (\ref{div_1}) and (\ref{div_2}), results in the condition:
\begin{flalign}
P \left( \|v\|_\infty\leq \lambda_{\min}(\mA) \right) 
& \geq 1 - n \cdot \sqrt{\frac{m}{\pi}}e^{- \frac{m}{4}} - 2e^{-\frac{m}{16}} \nonumber \\
& \geq 1 - 5n \cdot \sqrt{\frac{m}{\pi}}e^{- \frac{m}{16}}
\end{flalign}
Where in the last inequality we used again the assumption $m > n \geq 1$.

\clearpage
\bibliography{ref.bib}

\begin{thebibliography}{10}
\providecommand{\url}[1]{#1}
\csname url@samestyle\endcsname
\providecommand{\newblock}{\relax}
\providecommand{\bibinfo}[2]{#2}
\providecommand{\BIBentrySTDinterwordspacing}{\spaceskip=0pt\relax}
\providecommand{\BIBentryALTinterwordstretchfactor}{4}
\providecommand{\BIBentryALTinterwordspacing}{\spaceskip=\fontdimen2\font plus
\BIBentryALTinterwordstretchfactor\fontdimen3\font minus
  \fontdimen4\font\relax}
\providecommand{\BIBforeignlanguage}[2]{{%
\expandafter\ifx\csname l@#1\endcsname\relax
\typeout{** WARNING: IEEEtran.bst: No hyphenation pattern has been}%
\typeout{** loaded for the language `#1'. Using the pattern for}%
\typeout{** the default language instead.}%
\else
\language=\csname l@#1\endcsname
\fi
#2}}
\providecommand{\BIBdecl}{\relax}
\BIBdecl

\bibitem{verdu1998multiuser}
S.~Verdu \emph{et~al.}, \emph{Multiuser detection}.\hskip 1em plus 0.5em minus
  0.4em\relax Cambridge university press, 1998.

\bibitem{binaryConnect}
\BIBentryALTinterwordspacing
M.~Courbariaux, Y.~Bengio, and J.~David, ``Binaryconnect: Training deep neural
  networks with binary weights during propagations,'' \emph{CoRR}, vol.
  abs/1511.00363, 2015. [Online]. Available:
  \url{http://arxiv.org/abs/1511.00363}
\BIBentrySTDinterwordspacing

\bibitem{boyd2004convex}
S.~Boyd, S.~P. Boyd, and L.~Vandenberghe, \emph{Convex optimization}.\hskip 1em
  plus 0.5em minus 0.4em\relax Cambridge university press, 2004.

\bibitem{QCQP_SDR}
Z.-Q. Luo, W.-k. Ma, A.~So, Y.~Ye, and S.~Zhang, ``Semidefinite relaxation of
  quadratic optimization problems,'' \emph{Signal Processing Magazine, IEEE},
  vol.~27, pp. 20 -- 34, 06 2010.

\bibitem{fosson2018non}
S.~M. Fosson, ``Non-convex approach to binary compressed sensing,'' in
  \emph{2018 52nd Asilomar Conference on Signals, Systems, and
  Computers}.\hskip 1em plus 0.5em minus 0.4em\relax IEEE, 2018, pp.
  1959--1963.

\bibitem{fosson2019recovery}
S.~M. Fosson and M.~Abuabiah, ``Recovery of binary sparse signals from
  compressed linear measurements via polynomial optimization,'' \emph{IEEE
  Signal Processing Letters}, 2019.

\bibitem{jalden2005complexity}
J.~Jald{\'e}n and B.~Ottersten, ``On the complexity of sphere decoding in
  digital communications,'' \emph{IEEE transactions on signal processing},
  vol.~53, no.~4, pp. 1474--1484, 2005.

\bibitem{ma2010semidefinite}
W.-K.~K. Ma, ``Semidefinite relaxation of quadratic optimization problems and
  applications,'' \emph{IEEE Signal Processing Magazine}, vol. 1053, no.
  5888/10, 2010.

\bibitem{BNN_SDR}
A.~So, ``Probabilistic analysis of the semidefinite relaxation detector in
  digital communications,'' \emph{Proceedings of the Annual ACM-SIAM Symposium
  on Discrete Algorithms}, pp. 698--711, 01 2010.

\bibitem{jalden2008diversity}
J.~Jald{\'e}n and B.~Ottersten, ``The diversity order of the semidefinite
  relaxation detector,'' \emph{IEEE Transactions on Information Theory},
  vol.~54, no.~4, pp. 1406--1422, 2008.

\bibitem{Hubara:2017:QNN}
\BIBentryALTinterwordspacing
I.~Hubara, M.~Courbariaux, D.~Soudry, R.~El-Yaniv, and Y.~Bengio, ``Quantized
  neural networks: Training neural networks with low precision weights and
  activations,'' \emph{J. Mach. Learn. Res.}, vol.~18, no.~1, pp. 6869--6898,
  Jan. 2017. [Online]. Available:
  \url{http://dl.acm.org/citation.cfm?id=3122009.3242044}
\BIBentrySTDinterwordspacing

\bibitem{XNOR}
\BIBentryALTinterwordspacing
M.~Rastegari, V.~Ordonez, J.~Redmon, and A.~Farhadi, ``{X}{N}{O}{R}-{N}et:
  {I}mage{N}et classification using binary convolutional neural networks,''
  \emph{CoRR}, vol. abs/1603.05279, 2016. [Online]. Available:
  \url{http://arxiv.org/abs/1603.05279}
\BIBentrySTDinterwordspacing

\bibitem{dorefaNet}
\BIBentryALTinterwordspacing
S.~Zhou, Z.~Ni, X.~Zhou, H.~Wen, Y.~Wu, and Y.~Zou, ``{D}o{R}e{F}a-{N}et:
  Training low bitwidth convolutional neural networks with low bitwidth
  gradients,'' \emph{CoRR}, vol. abs/1606.06160, 2016. [Online]. Available:
  \url{http://arxiv.org/abs/1606.06160}
\BIBentrySTDinterwordspacing

\bibitem{BNN_gumbelSoftmax}
C.~Louizos, M.~Reisser, T.~Blankevoort, E.~Gavves, and M.~Welling, ``Relaxed
  quantization for discretized neural networks,'' in \emph{International
  Conference on Learning Representations (ICLR)}, 2018.

\bibitem{bethge2019back}
J.~Bethge, H.~Yang, M.~Bornstein, and C.~Meinel, ``Back to simplicity: How to
  train accurate bnns from scratch?'' \emph{arXiv preprint arXiv:1906.08637},
  2019.

\bibitem{yin2019understanding}
P.~Yin, J.~Lyu, S.~Zhang, S.~Osher, Y.~Qi, and J.~Xin, ``Understanding
  straight-through estimator in training activation quantized neural nets,'' in
  \emph{International Conference on Learning Representations (ICLR)}, 2019.

\bibitem{razaviyayn2020nonconvex}
M.~Razaviyayn, T.~Huang, S.~Lu, M.~Nouiehed, M.~Sanjabi, and M.~Hong,
  ``Nonconvex min-max optimization: Applications, challenges, and recent
  theoretical advances,'' \emph{IEEE Signal Processing Magazine}, vol.~37,
  no.~5, pp. 55--66, 2020.

\bibitem{goodfellow2014generative}
I.~Goodfellow, J.~Pouget-Abadie, M.~Mirza, B.~Xu, D.~Warde-Farley, S.~Ozair,
  A.~Courville, and Y.~Bengio, ``Generative adversarial nets,'' in
  \emph{Advances in neural information processing systems}, 2014, pp.
  2672--2680.

\bibitem{daskalakis2018limit}
C.~Daskalakis and I.~Panageas, ``The limit points of (optimistic) gradient
  descent in min-max optimization,'' in \emph{Advances in Neural Information
  Processing Systems}, 2018, pp. 9236--9246.

\bibitem{on_gda}
T.~Lin and C.~J. M.~I. Jordan, ``On gradient descent ascent for
  nonconvex-concave minimax problems,'' in \emph{International Conference on
  Machine Learning (ICML)}, 2020.

\bibitem{localMinimax}
C.~Jin, P.~Netrapalli, and M.~I. Jordan, ``What is local optimality in
  nonconvex-nonconcave minimax optimization?'' in \emph{International
  Conference on Machine Learning (ICML)}, 2020.

\bibitem{ge2016matrix}
R.~Ge, J.~D. Lee, and T.~Ma, ``Matrix completion has no spurious local
  minimum,'' in \emph{Advances in Neural Information Processing Systems}, 2016,
  pp. 2973--2981.

\bibitem{LeePPSJR19}
\BIBentryALTinterwordspacing
J.~D. Lee, I.~Panageas, G.~Piliouras, M.~Simchowitz, M.~I. Jordan, and
  B.~Recht, ``First-order methods almost always avoid strict saddle points,''
  \emph{Math. Program.}, vol. 176, no. 1-2, pp. 311--337, 2019. [Online].
  Available: \url{https://doi.org/10.1007/s10107-019-01374-3}
\BIBentrySTDinterwordspacing

\bibitem{JinMinimax2019}
\BIBentryALTinterwordspacing
C.~Jin, P.~Netrapalli, and M.~I. Jordan, ``Minmax optimization: Stable limit
  points of gradient descent ascent are locally optimal,'' \emph{CoRR}, vol.
  abs/1902.00618, 2019. [Online]. Available:
  \url{http://arxiv.org/abs/1902.00618}
\BIBentrySTDinterwordspacing

\bibitem{goemans1995improved}
M.~X. Goemans and D.~P. Williamson, ``Improved approximation algorithms for
  maximum cut and satisfiability problems using semidefinite programming,''
  \emph{Journal of the ACM (JACM)}, vol.~42, no.~6, pp. 1115--1145, 1995.

\bibitem{huber2004robust}
P.~J. Huber, \emph{Robust statistics}.\hskip 1em plus 0.5em minus 0.4em\relax
  John Wiley \& Sons, 2004, vol. 523.

\bibitem{abadi2016tensorflow}
M.~Abadi, P.~Barham, J.~Chen, Z.~Chen, A.~Davis, J.~Dean, M.~Devin,
  S.~Ghemawat, G.~Irving, M.~Isard \emph{et~al.}, ``Tensorflow: A system for
  large-scale machine learning,'' in \emph{12th $\{$USENIX$\}$ Symposium on
  Operating Systems Design and Implementation ($\{$OSDI$\}$ 16)}, 2016, pp.
  265--283.

\bibitem{adam}
D.~P. {Kingma} and J.~{Ba}, ``{Adam: A Method for Stochastic Optimization},''
  \emph{arXiv e-prints}, p. arXiv:1412.6980, Dec. 2014.

\bibitem{BOP}
K.~G. Helwegen, J.~Widdicombe, L.~Geiger, Z.~Liu, K.-T. Cheng, and
  R.~Nusselder, ``Latent weights do not exist: Rethinking binarized neural
  network optimization,'' in \emph{NeurIPS}, 2019.

\bibitem{Vershynin2010Wishart}
R.~Vershynin, ``Introduction to the non-asymptotic analysis of random
  matrices,'' in \emph{Compressed Sensing}, 2010.

\bibitem{feller-vol-1}
W.~Feller, \emph{An introduction to probability theory and its applications.
  {V}ol. {I}}, ser. Third edition.\hskip 1em plus 0.5em minus 0.4em\relax New
  York: John Wiley \& Sons Inc., 1968.

\end{thebibliography}

\ifCLASSOPTIONcaptionsoff
  \newpage
\fi

\end{document}